\newcommand{\todo}[1]{
  \textcolor{red}{\footnotesize \textsf{#1}}
}
\renewcommand{\todo}[1]{} 
\newcommand{\figheight}{1in}
\newcommand{\figDim}{1.0}
\newcommand{\proc}{\textsc}
\newcommand{\tablelength}{3.6cm}
\newtheorem{theorem}{Theorem}[section]
\newtheorem{lemma}[theorem]{Lemma}
\newtheorem{definition}{Definition}[section]
\title{\LARGE \bf
Exploration via Structured Triangulation by a Multi-Robot System with Bearing-Only Low-Resolution Sensors
}
\author{SeoungKyou~Lee, 
        Aaron~Becker, 
        S\'andor~P.~Fekete, 
        Alexander~Kr\"oller, and
        ~James~McLurkin 
\thanks{S. Lee, A. Becker, and J. McLurkin are with the
Computer Science Department, Rice University, Houston, TX, 77005 USA e-mail: \tt\small{sl28@rice.edu}.}
\thanks{A. Kr\"oller and S. Fekete are with the Computer Science Department, TU Braunschweig, Braunschweig, Germany.}
}
\begin{document}

\maketitle
\thispagestyle{empty}
\pagestyle{empty}

\begin{abstract}
This paper presents a distributed approach for exploring and triangulating
an unknown region using a multi-robot system.  The objective is to produce a covering of an unknown workspace by a fixed number of robots such that the
covered region is maximized, solving the \emph{Maximum Area Triangulation
Problem} (MATP).  The resulting triangulation is a \emph{physical data
structure} that is a compact representation of the workspace; it contains
distributed knowledge of each triangle, adjacent triangles, and the dual graph of the workspace.  Algorithms can store information in this physical data structure, such as a routing table for robot navigation

Our algorithm builds a triangulation in a closed environment, starting from a single location.  It provides coverage with a breadth-first search pattern and completeness guarantees.  We show the computational and communication requirements to build and maintain the triangulation and its dual graph are small. Finally, we present a physical navigation algorithm that uses the dual graph, and show that the resulting path lengths are within a constant factor of the shortest-path Euclidean distance.  
We validate our theoretical results with experiments on triangulating a region with a system of low-cost robots.
Analysis of the resulting quality of the triangulation shows that most of the triangles are of high quality, and cover a large area.  Implementation of the triangulation, dual graph, and navigation all use communication messages of fixed size, and are a practical solution for large populations of low-cost robots.
%
\end{abstract}

\section{Introduction and Related Work}
\label{sec:Introduction}

Many practical applications of multi-robot systems, such as search-and-rescue, exploration, mapping and surveillance require robots to disperse across a
large geographic area.  Large populations of robots offer two large advantages: they can search the environment rapidly using a breadth-first approach, and can maintain coverage of the environment after the dispersion is complete.

In this paper, we demonstrate that triangulating the workspace with a
multi-robot system is a useful approach to dispersion and
monitoring. Triangulations are used in a large variety of applications because of their useful properties.  In our application they provide complete coverage, they can be built with only basic local geometry, and they allow
proofs of properties for coverage, navigation, and distributed data storage.
The underlying topological structure of a triangulation allows us to exploit
its dual graph for mapping and routing, with performance guarantees for these
purposes. Fig.~\ref{fig:NavTestResult} shows an example output demonstrating a triangulated network, its dual graph, and a navigating robot.

\begin{figure}[t]
\renewcommand{\figheight}{1.6in}
\centering
\includegraphics[width=\linewidth]{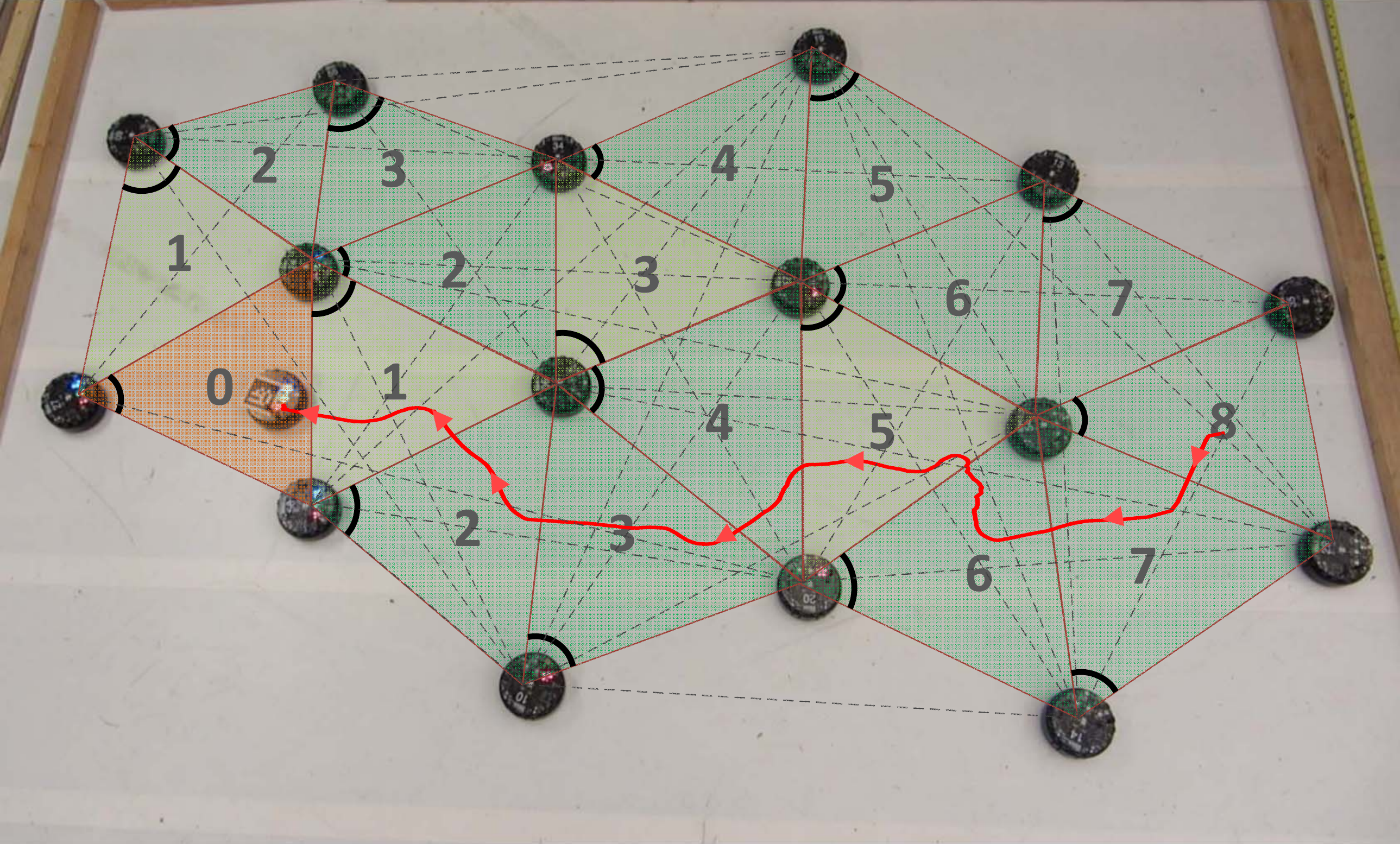}
\caption{
\label{fig:NavTestResult}
A sample triangulation and navigation experiment result with 17 r-one robots. The planar network of triangulation is a subset of the full network (dashed gray lines). Each robot creates a new triangle (dark green) by expanding toward the frontier or discovers new triangles (light green triangles) by examining local network geometry. Small black arcs indicate which robot creates and stores which triangle. This is a distributed ~\emph{physical data structure}; there is no centralized storage of triangulation information.
The network between adjacent triangles forms a \emph{dual graph} of the triangulation. In this example, a navigating robot uses a tree rooted at the red triangle to guide it from its start location to its current location. It follows the red path, the numbers indicate hops in the dual graph from the goal triangle.
}
\end{figure}


We are interested in solutions for large populations of robots, and focus our attention on approaches applicable on small, low-cost robots with limited sensors and capabilities.  In this work, we assume that robots do not have a map of the environment, nor the ability to localize itself relative to the environment geometry, \emph{i.e.} SLAM-style mapping is beyond the capabilities of our platform.  We exclude solutions that
use centralized control, as the communication and processing constraints do
not allow these approaches to scale to large populations.  We also do not assume that GPS localization or external communication infrastructure is available, which are limitations present in an unknown indoor environment.  Finally, we assume that the communication range is much smaller than the size of the environment, so a multi-hop network is required for communication, and the \emph{local network geometry} provides each robot with geometric information about its neighboring robots.

The basic problem requires exploring an unknown region by triangulation from a given starting position. The maximum edge length is a triangle is bounded by
the communications range of the robots. If the number of available robots is
not bounded a priori, the problem of minimizing their number for covering all
of the region is known as the {\em Minimum Relay Triangulation Problem}
(MRTP); if their number is fixed, the objective is to maximize the covered area, which is known as the {\em Maximum Area Triangulation Problem} (MATP). Both problems have been studied both for the {\em offline} scenario, in which the region is fully known, and the {\em online} scenario, where the region is not known in advance~\cite{fkk+-etrsr-11}.  Online MRTP admits a 3-competitive strategy, while the online MATP does not allow a bounded competitive factor: If the region consists of many narrow corridors, we may  run out of robots exploring them, and thereby miss a large room that could permit large triangles.
In this work, we focus on the online MATP problem. Our algorithm proceeds by extending the covered region by adding new triangles to the frontier of the exploration. The motion controllers we present use local geometric information. In particular, we focus on a simple platform that can only measure angles between neighbors and detect nearby obstacles. We provide a number of results:

\begin{itemize*}
\item We develop simple and efficient exploration methods based on triangulation.
\item We show that these methods only require local information and geometry.
\item We demonstrate that well-known abstract concepts (such as the dual graph) can be implemented in a distributed network of robots.
\item We provide provable performance guarantees for online triangulation and routing.
\item We demonstrate the practicality of our method by implementing it with simple, low-cost robots. (See our video~\cite{bfk+-tueur-13} for an overview of \cite{fkk+-etrsr-11}.)
\end{itemize*}



\subsection*{Related Work}
Our work combines ideas of self-organization and routing in stationary sensor networks with approaches to dynamic robot swarms. For the former, e.g., see \cite{fk-gbrls-06,fk-trsn-07}. The new challenges arise from considering a large number of mobile nodes with limited capabilities, and real-life platforms and constraints.  Classical triangulation problems seek a triangulation of all vertices of a polygon, but allow arbitrary length of the edges in the triangulation~\cite{fkk+-etrsr-11}. This differs from our problem, in which edge lengths are bounded by communication length.
Triangulations with shape constraints for the triangles and the use of Steiner points are considered in mesh generation, see for example the survey by Bern
and Eppstein~\cite{be-mgot-92i}.

The problem of placing a minimum number of relays with limited communication
range in order to achieve a connected network (a generalization of the
classical Steiner tree problem) has been considered by Efrat et
al.~\cite{efgmps-iaarp-08}, who gave a number of approximation results for the offline problem (a 3.11-approximation for the one-tier version and a PTAS for
the two-tier version of this problem); see the survey \cite{dfk-srpra-11} for related problems.  A similar question was considered by Bredin et al.~\cite{bdhr-dsngf-10}, who asked for the minimum number of relays to be placed to assure a $k$-connected network. They presented approximation results for the offline problem. 
For swarms, Hsiang et al.~\cite{habfm-ardrs-04}
consider the problem of dispersing a swarm of simple robots in a cellular
environment, minimizing the time until every cell is occupied by a robot. For
workspaces with a single entrance door, Hsiang et al.~present algorithms with
time optimal \emph{makespan} and $\Theta(\log(k+1))$-competitive algorithms for $k$ doors.
%

There are many types of exploration in the multi-robot literature. McLurkin and Smith~\cite{ms-dadie-04} present a breadth-first distribution from a more practical view, using a swarm of 100 robots.
Durham et al.~\cite{durhamBullo} present an algorithm for a team of robots to cover entire region using pursuit-evasion problem, using the robot's state to store intermediate results of the search.
Spears and Spears describe slgorithms for to produce a triangle lattice, but this is not a \emph{triangulation}: there is no knowledge of triangles, the dual graph, or distributed data structures for computation\cite{spears_overview_2005}.

\todo{add centibot ref, spears ref}


\section{Computational Model}
\label{sec:ModelAndAssumptions}

We have a system of $n$ robots.  The communication network is an undirected graph $G=(V,E)$. Each robot is modeled as a vertex, $u \in V$, where $V$ is the set of all robots and $E$ is the set of all robot-to-robot communication links. The neighbors of each vertex $u$ are the set of robots within line-of-sight communication range $r_{max}$ of robot $u$, denoted $N(u)=\{v \in V\ \mid \{u,v\} \in E\}$. Robot $u$ sits at the origin of its local coordinate system, with the $\hat{x}$-axis aligned with its current heading.  Each robot can measure the angles of the geometry of its local network, as shown in Fig.~\ref{fig:BearingOrientDefinition}. Robot $u$ cannot measure distance to its neighbors, but can only measure the \emph{bearing} and \emph{orientation}. We assume that these angular measurements have limited resolution.

\begin{figure}[t]
  \renewcommand{\figDim}{0.7}
  \centering
  \subfloat[][Bearing and orientation definitions.]{
  \label{fig:BearingOrientDefinition}
  \includegraphics[scale=\figDim]{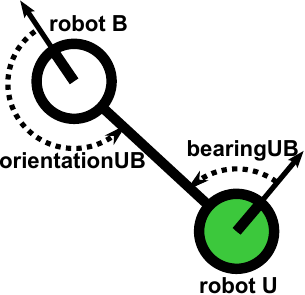}}
  \hspace{16pt}
  \subfloat[][Measuring triangle angles using 2-hop  coordinate system.]{
  \label{fig:TriangleAngleDefinition}
  \includegraphics[scale=\figDim]{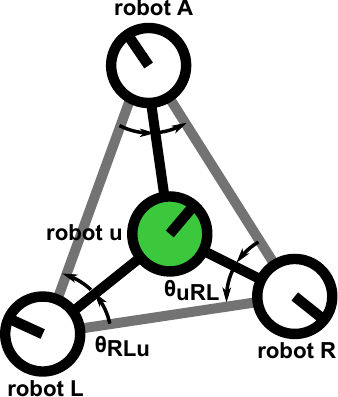}}
  \caption{\textbf{\protect\subref{fig:BearingOrientDefinition}} Robot $u$ can measure the bearing to neighbor $u_L$, $B_u(u_L)$, and the orientation of neighbor $u_L$, $Ori(u_L)$. \textbf{\protect\subref{fig:TriangleAngleDefinition}}, triangle angles (black arrows) are measured from neighbors of robot $u$, and shared with $u$ using a local broadcast message. We define left (right) inner angle for the triangle angle of $u$'s left (right).} 
\end{figure}
\setlength{\textfloatsep}{5pt}
Robots share their angle measurements with their neighbors. In this way, robot $u$ can learn of all angles in its 2-hop neighborhood. Fig.~\ref{fig:TriangleAngleDefinition} shows the relevant \emph{inner angles} of a triangle around $u$.  Each neighbor of $u$ computes these angles from local bearing measurements, then announced them.  The communication used by these messages is $O(max(\delta(u \in V)^2)$, where $\delta(u)$ is the degree of vertex $u$.

Each robot has contact sensors that detect collisions with the environment. There is an obstacle avoidance behavior that can effectively maneuver the robot away from these collisions. The robots also have a short-range obstacle sensor that can detect walls closer than  $\approx50$ cm.  The obstacle sensor does not detect neighboring robots.

Algorithm execution occurs in a series of synchronous \emph{rounds}, $t_{r}$.  This greatly simplifies analysis and is straightforward to implement in a physical system~\cite{mclurkin_analysis_2008}.  At the end of each round, every robot $u$ broadcasts a message to all of its neighbors. The robots randomly offset their initial transmission to minimize collisions.
During the duration of each round, robot $u$ receives a message from each neighbor $v \in N(u)$. Each message contains a set of public variables, including the sending robot's unique ID number $u.id$. The remaining variables will be defined later, but we note that the number of bits needed for each variable is bounded by $log_2 n$, i.e. the number of bits required to identify each robot. This produces a total message of constant size.

\section{Max-Area Triangulation Algorithm}
\label{sec:MATAlgorithm}

%


Fig.~\ref{fig:bfs} illustrates the execution of the Max-Area Triangulation (MAT) algorithm. Initially, two~\emph{base robots} mark the~\emph{base edge} --- such as a door to an unexplored building. The algorithm starts with this base edge and proceeds by constructing a triangulation in a breadth-first manner.  The triangulation is extended as robots construct triangles along the current \emph{frontier} of exploration. The frontier is shown as blue lines in Fig.~\ref{fig:bfs}, and it delineates the boundary between triangulated space and untriangulated space.  All the area between the base edge and the frontier is triangulated. Each mobile robot extends the frontier by moving into unexplored space and forming a triangle with itself and at least two other adjacent robots from the frontier.  The algorithm terminates when either all of the workspace has been explored, or the maximum number of robots has been exhausted.

Each robot tries to build a \emph{high-quality} triangle---one that does not have edges that are too short or angles that are too small.  Equilateral triangles are ideal, but cannot always be constructed due to errors or environmental constraints.

\begin{figure}[t]
  \renewcommand{\figheight}{3.0in}
  \centering
  \includegraphics[width=.45\linewidth]{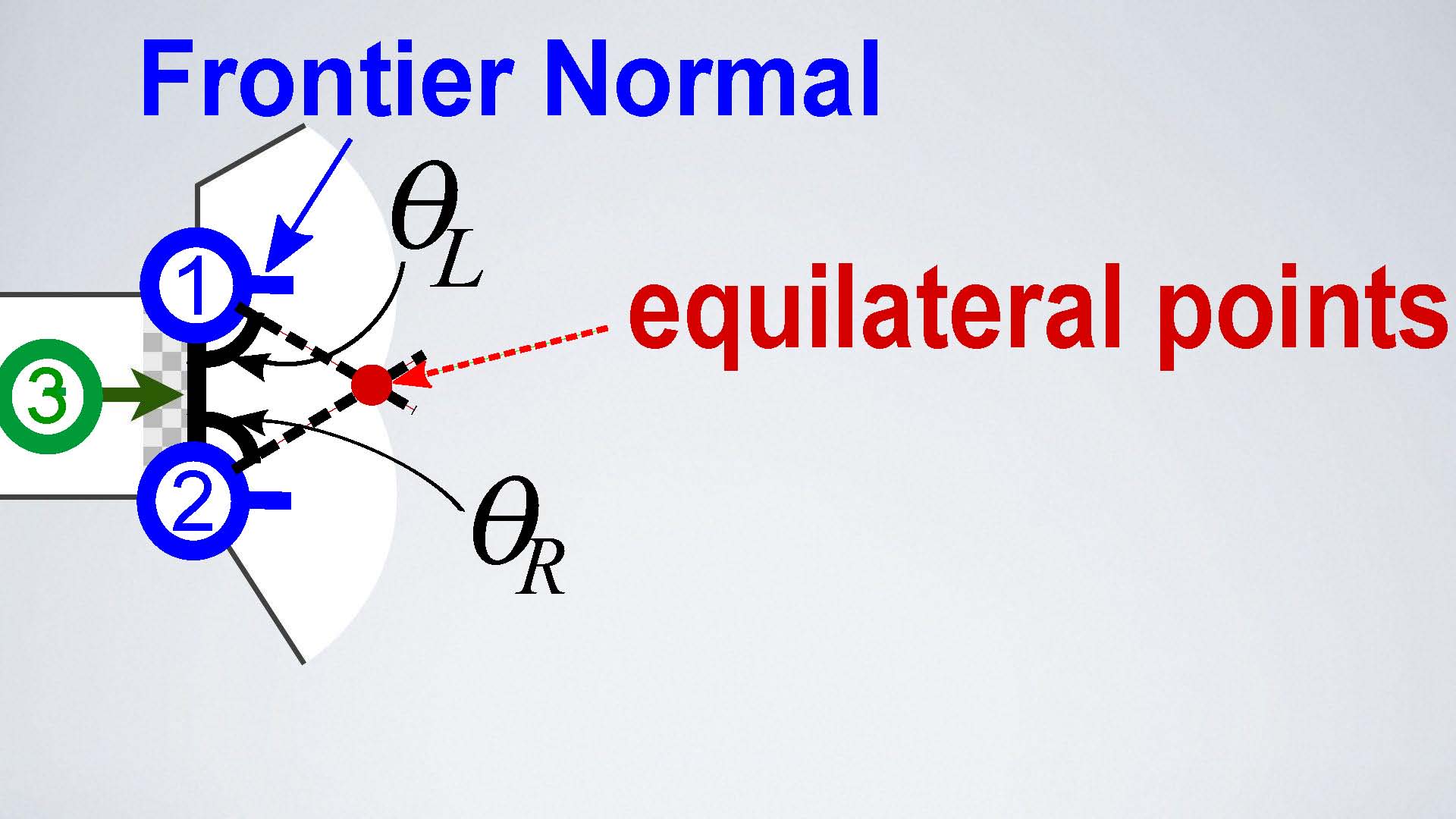}
\vspace*{1mm}
  \includegraphics[width=.45\linewidth]{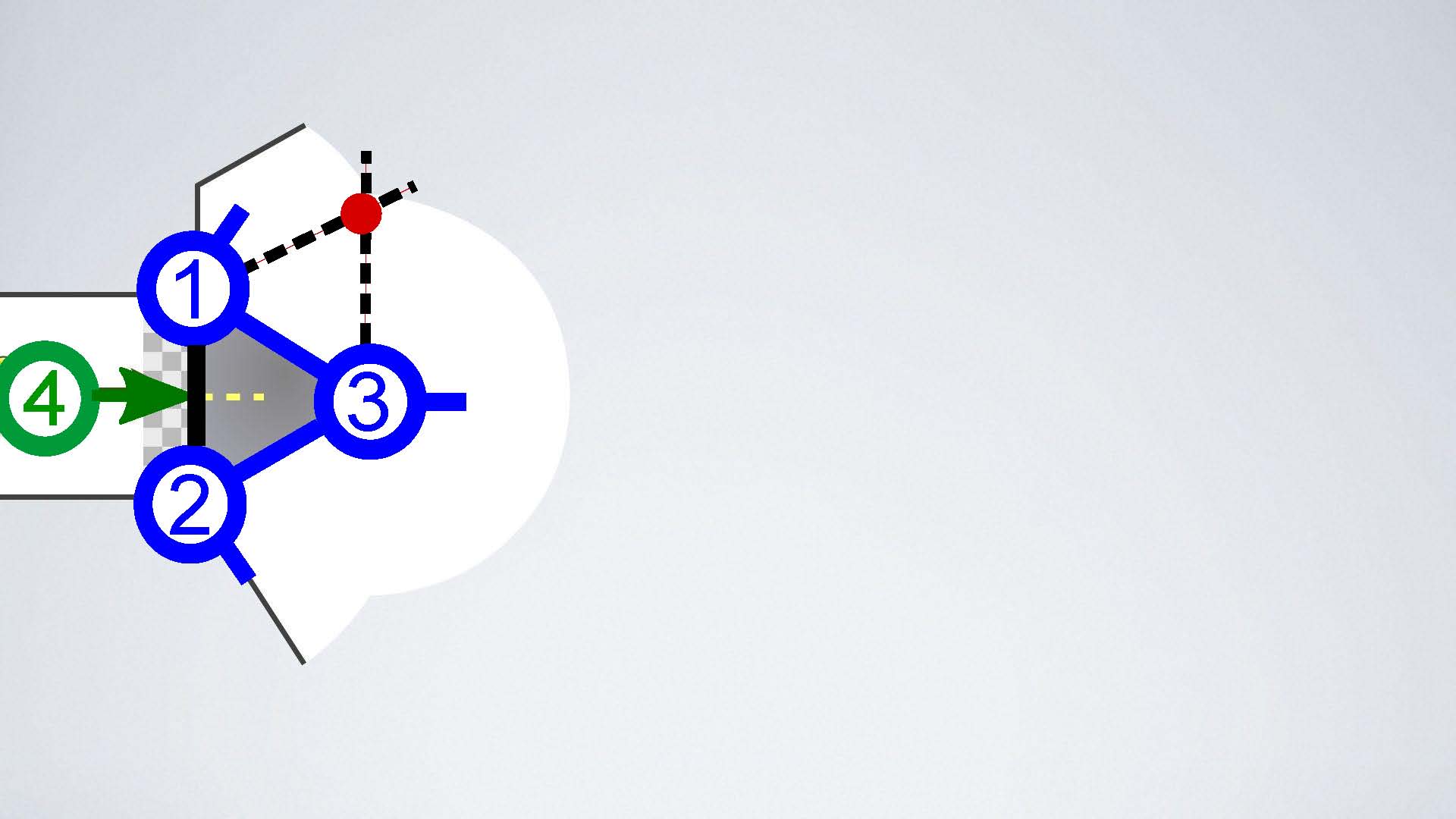}
\vspace*{1mm}
  \includegraphics[width=.45\linewidth]{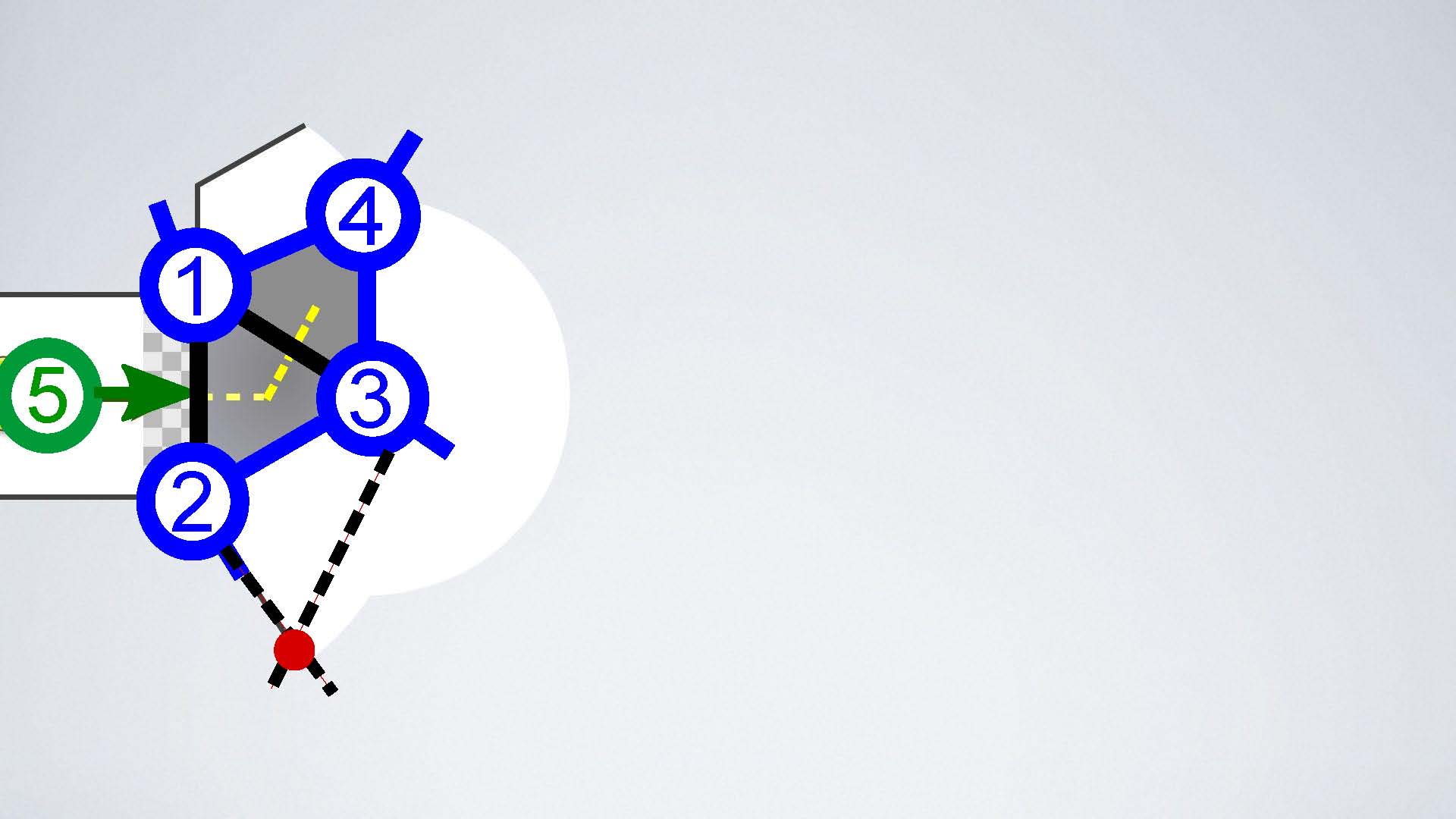}
\vspace*{1mm}
\includegraphics[width=.45\linewidth]{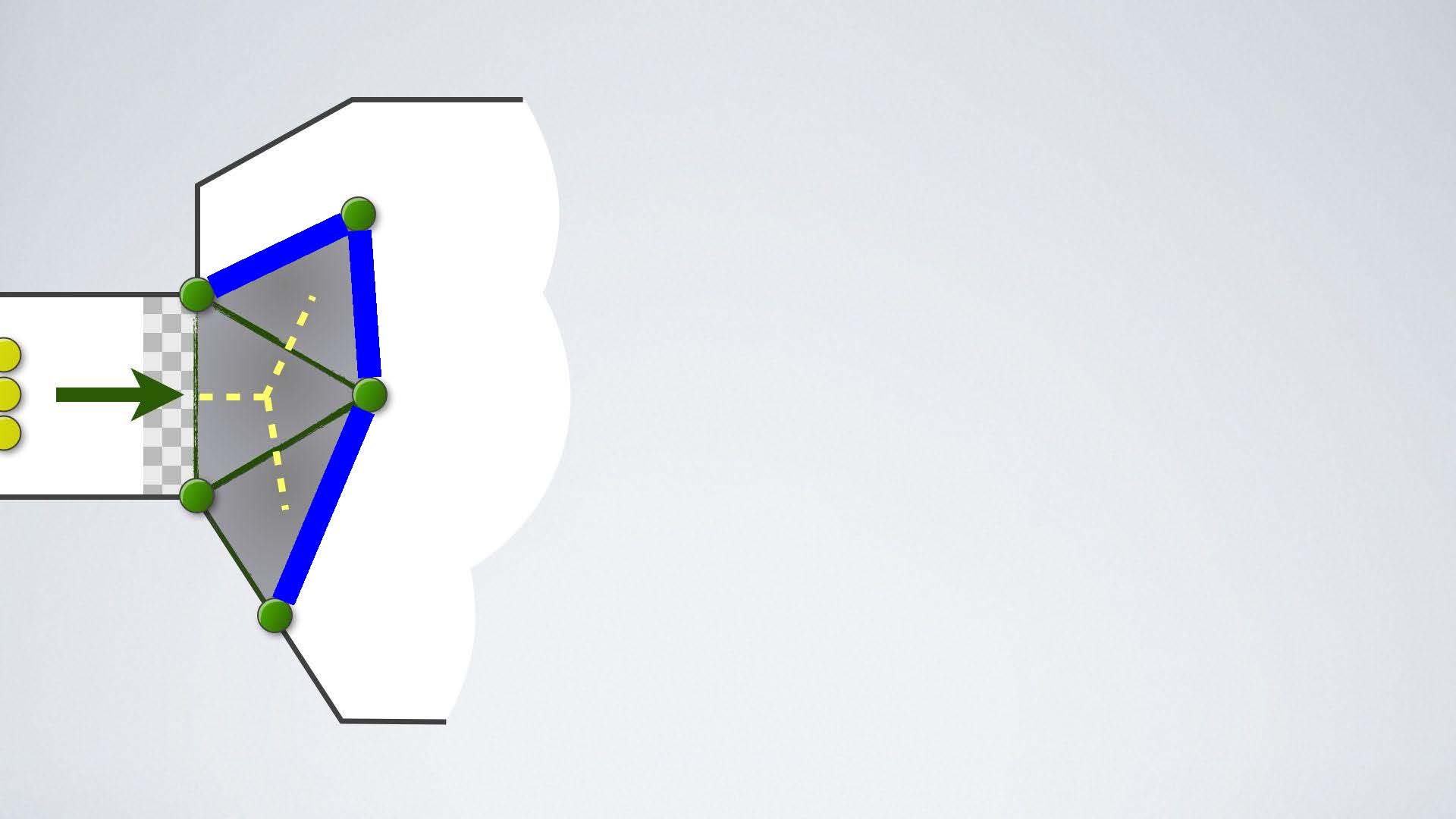}
\vspace*{1mm}
 \includegraphics[width=.45\linewidth]{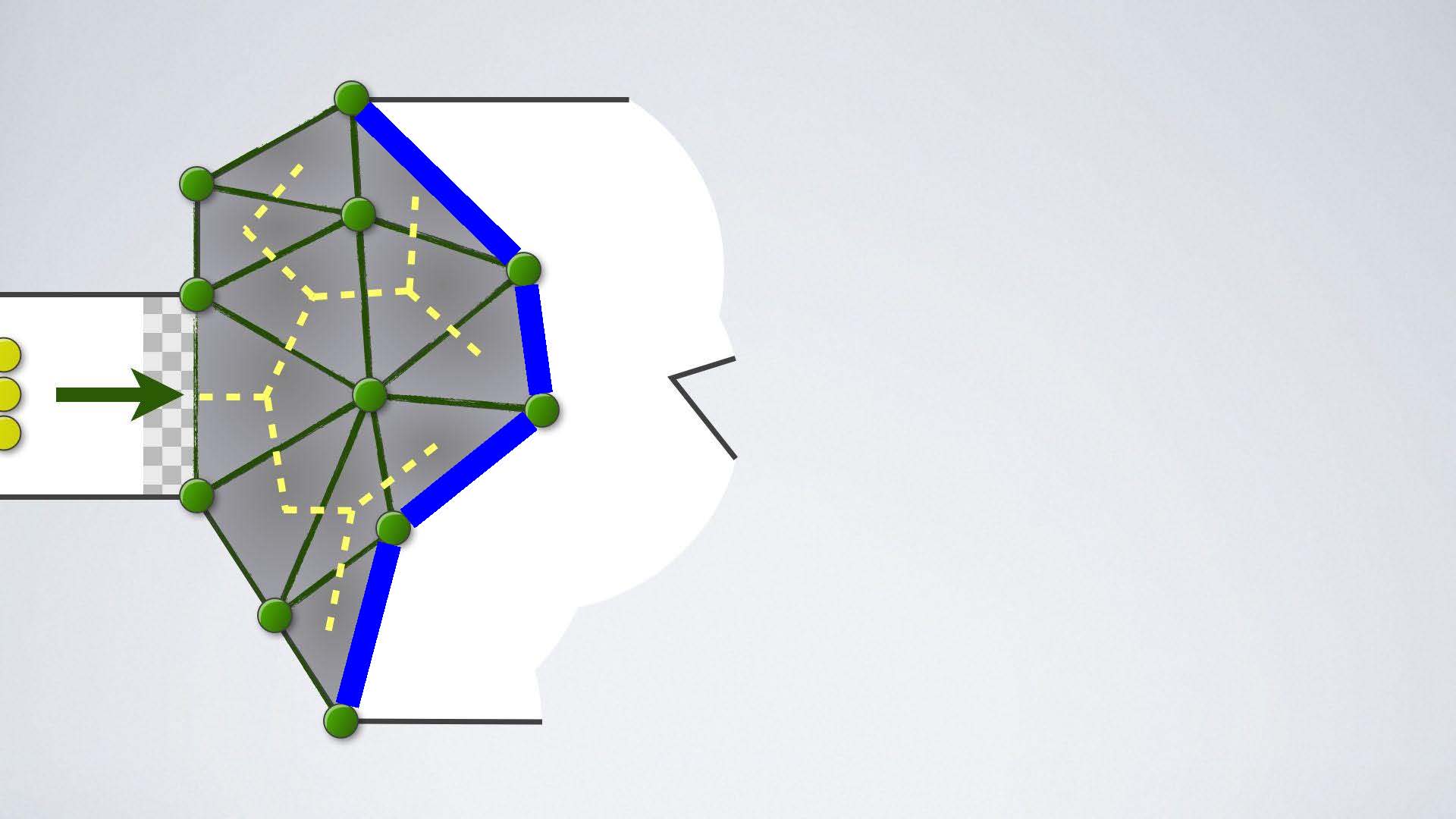}
\vspace*{1mm}
  \includegraphics[width=.45\linewidth]{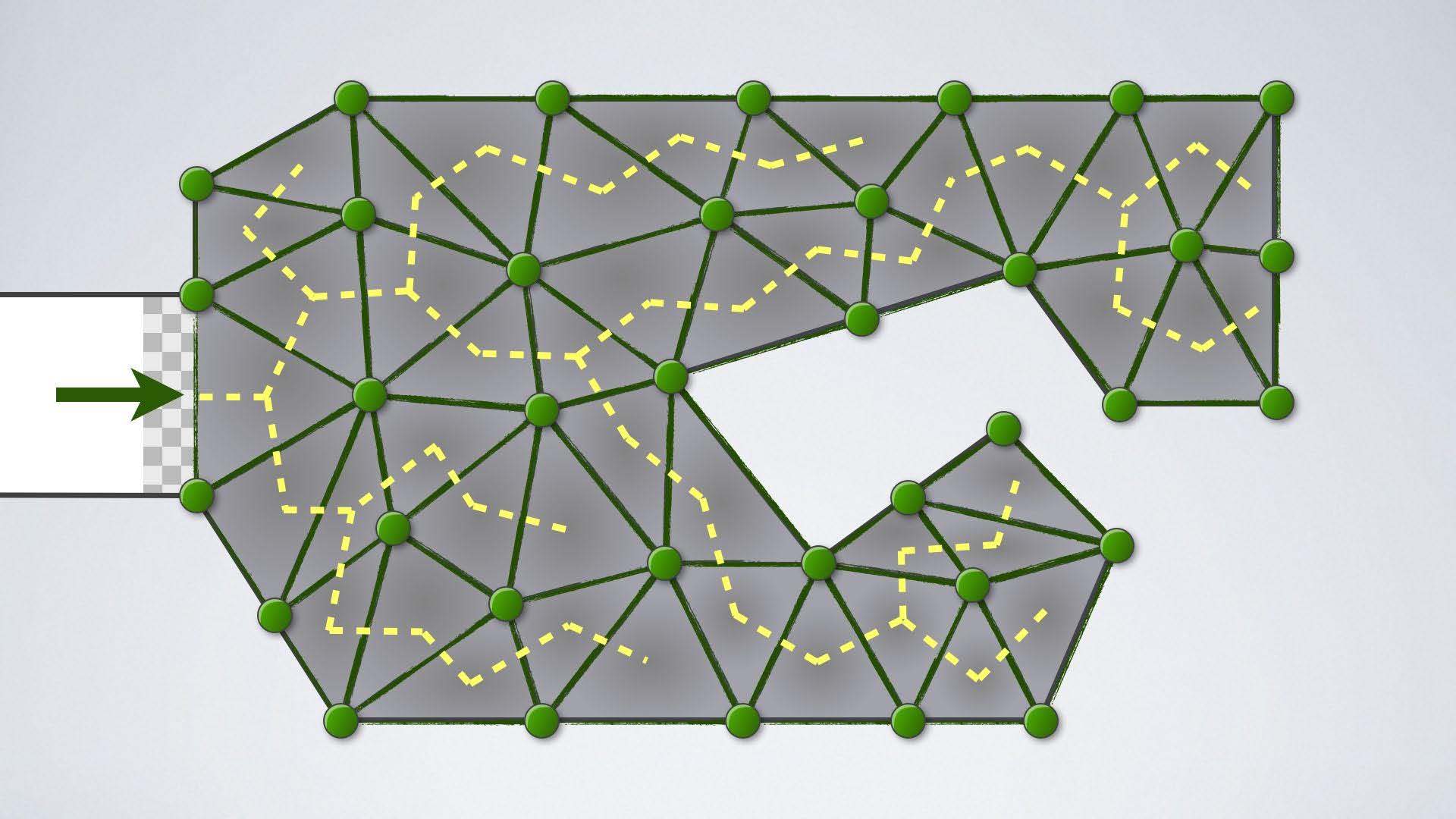}
  \caption{Constructing a triangulation in a BFS manner. The frontier edges are blue and interior edges are green. The edges of the BFS tree (the dual graph) are yellow.  The blue tick marks on each robot show the direction of the \emph{frontier normal}.  This points into unexplored space, in the direction perpendicular to the frontier edges incident at each robot.}
  \label{fig:bfs}
\end{figure}

%
%

During algorithm execution, we we distinguish the following types of edges in the robot network $G$: 1) \emph{Frontier edges} (Blue lines in Fig.~\ref{fig:bfs}), $\{u,v\}\in E_F$, which belong to only one triangle and have at least one vertex that is not in contact with the wall. 2) \emph{Internal edges}, $\{u,v\}\in E_I$\, which belong to two adjacent triangles. 3) \emph{Wall edges}, $\{u,v\}\in E_W$, which also belong to only one triangle, but both vertices of the edge are in contact with a wall. 
The yellow lines indicate the dual graph, $D$, which connects adjacent triangles.  We will address the detail of the dual graph in Section.~\ref{sec:DualGraph}.






\subsection{Triangulation}
\label{discoveryTri}
%

Construction of a new triangle begins with the addition of a new \emph{navigating robot}, $u$.  To build the triangulation in a breadth-first fashion, a~\emph{frontier triangle} is selected that is the minimum distance in the dual graph from the base triangle.  This triangle will have at least one frontier edge, we select it to be the \emph{goal frontier edge}, $\{l,r\}$.   The robot uses the dual graph to navigate to the frontier triangle, these algorithms are described in Secs.~\ref{sec:DualGraph} and~\ref{sec:DualGraphNavigation}.

A new triangle can be formed in two ways, \emph{expansion} or \emph{discovery}.  Fig.~\ref{fig:Expand} illustrates the construction of a triangle by expansion. When navigating robot $u$ is within the frontier triangle, it switches to the \emph{expanding state}, and moves towards the equilateral point for the new triangle. When $u$ crosses the frontier edge $\{l,r\}$, it creates a new expansion triangle $\Delta ulr$ ($l=l_0$ and $r = r_0$ in Fig.~\ref{fig:Expand}).  Once robot $u$ arrives at the equilateral point, it switches to the \emph{expanded state}, and adds $\Delta ulr$ to its list of triangles, becoming its owner. Edge $\{l,r\}$ becomes an internal edge, and robot $u$ broadcasts a message to neighbors $l$ and $r$, so that they update their right and left frontier neighbors to $u$. Because the edge $\{l,r\}$ is now internal, it is not used for expansion again, which prevents creating overlapping triangles.

When $u$ enters the expanded state, it needs to discover all of the unexpanded high-quality triangles adjacent to $\Delta ulr$.  Fig.~\ref{fig:Discovery} shows an example of triangle discovery.  We describe the process for the left frontier neighbor ($l$), it is analogous for the right.  We label the left neighbors $\{l_0, l_1, \ldots\}$ where $l_0 \equiv l$.  Robot $u$ first considers neighbor $l_1$, then proceeds through each neighbor on its left side in counter-clockwise order.  For each neighbor $l_i, i\geq 1$, robot $u$ checks for edge $\{l_i, l_{i-1}\} \in E_F$.  If this edge exists, then $u$ forms a candidate triangle, $\Delta ul_il_{i-1}$ (light green in Fig.~\ref{fig:Discovery}), and evaluates its quality using definition~\ref{def:alphaFat}.  The search terminates if the triangle is not high-quality, or there are no further neighbors to consider.  If the candidate triangle is high-quality, robot $u$ becomes its owner, and switches its left frontier neighbor from $l_{i-1}$ to $l_i$. Robot $u$ then broadcasts a message to $l_i$ to update its right frontier neighbor from $i_{i-1}$ to $u$.

\begin{figure}[t]
\renewcommand{\figDim}{1.1}
\centering
  \subfloat[][Forming an Expansion type of triangle.]{
  \label{fig:Expand}
  \includegraphics[width = .45\linewidth]{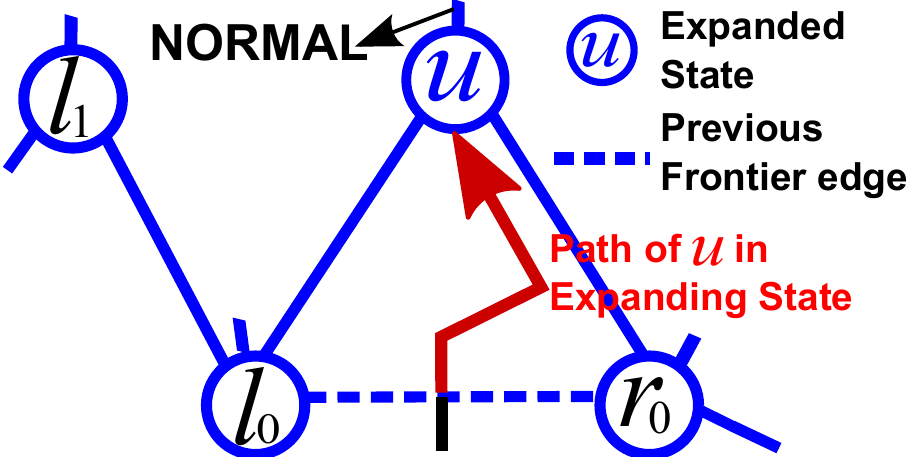}}
  \hspace{3pt}
  \subfloat[][Forming a Discovery type of triangle.]{
  \label{fig:Discovery}
  \includegraphics[width=.4\linewidth]{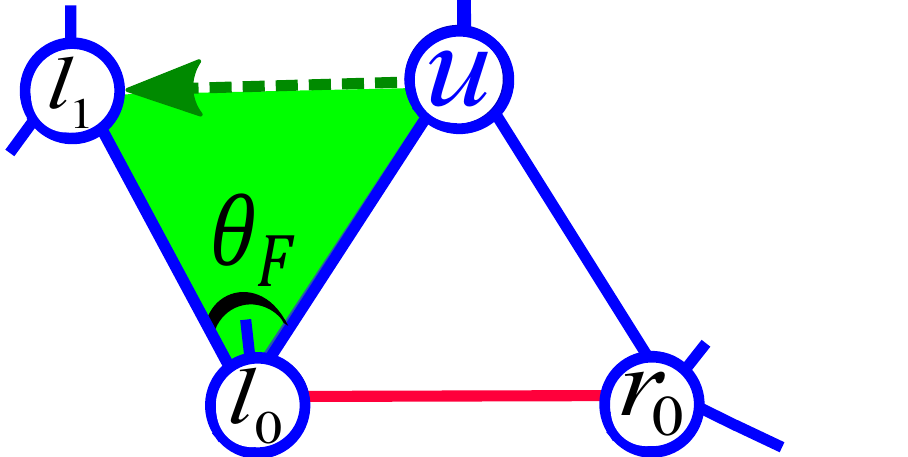}}
\caption{
\textbf{\protect\subref{fig:Expand}} An example of \emph{Expansion} triangle: Red arrow is the path of $u$ in \emph{Expanding} state. After arriving at the equilateral point, $u$ switches to \emph{Expanded} state.
\textbf{\protect\subref{fig:Discovery}} An example of \emph{Discovery} triangle (light green). $u$ checks $\theta_F$ to evaluate the quality of candidate triange, $\Delta ul_1l_0$. 
}
\end{figure}

%

%
%
%

\subsection{Dual Graph Construction}
\label{sec:DualGraph}

The dual graph of our a triangulation, $D$, describes the adjacencies between adjacent triangles. The dual graph can be used for realizing global objectives, such as routing. However, one difficulty for a distributed swarm of robots is the absence of a centralized authority that can explicitly keep track of a dual graph, as there are only ``primal'' vertices, i.e., robots.  Our solution is to establish and maintain the dual graph implicitly, by assigning each triangle $\Delta$ to a unique robot ``owner'', $o(\Delta)$, and then mapping edges between triangles in the dual graph to edges between robots in the primal graph.

We first observe that all owners are connected because all robots in our network, with the exception of the base robots, are owners by construction; every time a new navigating robot is added to the network, it becomes the owner of at least one constructed triangle. A robot can own multiple discovered triangles, and must maintain multiple vertices in the dual graph.

We must ensure that two triangle owners connected by an edge in the dual graph can communicate with each other through the primal graph. This is trivial for two triangles $\Delta_1$ and $\Delta_2$ owned by the same robot,
so we must show that for two different triangle owners $o(\Delta_1)\neq o(\Delta_2)$ with a dual graph edge, $\{o(\Delta_1),o(\Delta_2)\}_D \in D$,  $\{o(\Delta_1),o(\Delta_2)\}$ is an edge in the primal graph.



\begin{lemma}
\label{OwnerInFrontierEdge}
Consider edge $\{a,b\} \in E_F$ and $\Delta abo$, where $o$ is the triangle owner. Then $o = a$ or $o = b$.
\end{lemma}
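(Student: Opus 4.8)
The plan is to reduce the statement to two bookkeeping facts about the construction in Section~\ref{discoveryTri}. \textbf{Fact 1:} every triangle $\Delta$ is owned by the unique navigating robot $u$ that created it (by expansion or discovery); this $u$ is one of the three vertices of $\Delta$, and the ownership assignment is never changed afterwards (a later discovery only \emph{adds} triangles to a robot's list). \textbf{Fact 2:} at the moment $\Delta$ is created, the edge of $\Delta$ opposite $u$ is turned from a frontier edge into an internal edge, and an edge that has become internal stays internal forever --- it belongs to two triangles, and the algorithm neither removes robots nor removes triangles (this is the same reason the paper notes such an edge ``is not used for expansion again''). Both facts I would verify by a direct reading of the expansion/discovery rules.

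Granting these, the argument is short. Let $\{a,b\}\in E_F$; by the definition of a frontier edge it lies on exactly one triangle $\Delta$, and by Fact 1 the owner $o$ equals the robot $u$ that created $\Delta$, a vertex of $\Delta$. Now case on the creation event. If $\Delta$ was formed by expansion it is $\Delta u l r$ with $\{l,r\}$ the crossed goal frontier edge; if by discovery it is $\Delta u l_j l_{j-1}$ (or the symmetric right-hand version) with $\{l_j,l_{j-1}\}$ the discovered frontier edge. In both cases the edge of $\Delta$ opposite $u$ --- namely $\{l,r\}$ respectively $\{l_j,l_{j-1}\}$ --- is internal by Fact 2 and remains internal, hence is never a frontier edge. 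So the only edges of $\Delta$ that can currently be in $E_F$ are the two incident to $u$; since $\{a,b\}$ is one of them, $u\in\{a,b\}$, and therefore $o\in\{a,b\}$.

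I expect the one spot that needs real care is Fact 2 in the discovery case: I must confirm that the edge $\{l_j,l_{j-1}\}$ across which the candidate triangle is built was a frontier edge immediately before creation (it is, because the algorithm explicitly tests $\{l_j,l_{j-1}\}\in E_F$ before forming $\Delta u l_j l_{j-1}$) and that forming $\Delta$ then makes it belong to two triangles and hence internal. Two smaller points also deserve a sentence: I would fix attention on the stable configuration at the end of a round, so that the transient ``expansion triangle'' --- which exists before $u$ reaches the equilateral point and becomes owner --- is not counted; and I would dispose of the degenerate cases, namely that before any triangle exists the claim is vacuous, and that the base edge (whose two endpoints are the non-owner base robots) becomes internal the instant the first navigating robot expands across it, so it is never a frontier edge once any triangle is present.
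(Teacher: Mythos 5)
Your proof is correct and rests on the same key observation as the paper's: the owner of a triangle is the navigating robot that created it, and the edge of that triangle opposite the owner was the frontier edge crossed at creation and is internal ever after, so a current frontier edge of the triangle must be incident to the owner. The paper phrases this as a short contradiction argument mentioning only the expanding state, while you argue directly and also spell out the discovery case and the base-edge degeneracy, but the underlying idea is identical.
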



\begin{proof}
By contradiction: assume $o\neq a$ and $o\neq b$. Then consider the expanding state for $\Delta oab$. Since $o$ is the owner in the expanded state, $o$ must have been the navigation robot in the expanding state. Therefore $\{a,b\}$ was the frontier edge in the expanding state and $\{a,b\}$ is now the internal edge in the expanded state, a contradiction.
 \end{proof}


\begin{theorem}
\label{ConnectedOwners}
The owners of two adjacent triangles must also be connected.
\end{theorem}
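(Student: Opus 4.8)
The plan is to combine one easy structural fact about owners with Lemma~\ref{OwnerInFrontierEdge}, localizing the argument to the round in which the edge shared by the two triangles becomes internal. The structural fact I would record first is that the owner of any triangle is one of its three vertices: both constructions in Section~\ref{discoveryTri} --- expansion of $\Delta u l r$ and discovery of $\Delta u\,l_i\,l_{i-1}$ --- assign ownership to the navigating robot $u$, which is a named vertex of the triangle, indeed the vertex opposite the frontier edge that the construction absorbs into the triangulation (the edge $\{l,r\}$, resp.\ $\{l_i,l_{i-1}\}$, which was in $E_F$ just before and becomes internal when the triangle is formed).

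Now let $\Delta_1,\Delta_2$ be adjacent in the dual graph $D$, so they share a primal edge $e=\{a,b\}$, and write $\Delta_i=\Delta\,a\,b\,c_i$; assume $o(\Delta_1)\neq o(\Delta_2)$, since the equal case is trivial. If $o(\Delta_1)\in\{a,b\}$ I am done: by the structural fact $o(\Delta_2)\in\{a,b,c_2\}$, and each of $a$, $b$, $c_2$ is $G$-adjacent to both endpoints of $e$ (to $a$ and $b$ because $\{a,b\}\in E$, to $c_2$ because $\Delta\,a\,b\,c_2$ is a triangle), so $o(\Delta_2)$ is adjacent to $o(\Delta_1)$, i.e.\ $\{o(\Delta_1),o(\Delta_2)\}\in E$. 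The case $o(\Delta_2)\in\{a,b\}$ is symmetric. It therefore remains only to rule out $o(\Delta_1)=c_1$ and $o(\Delta_2)=c_2$, the two apexes on opposite sides of $e$.

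To kill that last case I would look at whichever of $\Delta_1,\Delta_2$ was created first, say $\Delta_1$. Since $o(\Delta_1)=c_1$ is the apex opposite the frontier edge absorbed at $\Delta_1$'s creation, that absorbed edge must be $e$ itself, so $e\in E_F$ in the round just before and belongs to a unique triangle $\Delta_0$. Immediately after $\Delta_1$ is built, $e$ lies in both $\Delta_0$ and $\Delta_1$ and in no third triangle (an internal edge keeps exactly its two triangles and is never reused), so $\Delta_2=\Delta_0$. Applying Lemma~\ref{OwnerInFrontierEdge} to $e\in E_F$ and $\Delta_0=\Delta\,a\,b\,o(\Delta_0)$ yields $o(\Delta_0)\in\{a,b\}$, i.e.\ $o(\Delta_2)\in\{a,b\}$, contradicting $o(\Delta_2)=c_2\notin\{a,b\}$. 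This contradiction completes the proof.

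The step I expect to need the most care is the identification, in the final case, of the absorbed frontier edge with the shared edge $e$: this rests on both construction rules making the owner the vertex opposite the absorbed edge, and on that edge being precisely the one that turns internal, so I would check both rules explicitly against Section~\ref{discoveryTri} rather than wave at them. A smaller point worth stating cleanly is that an edge of the triangulation lies in at most two triangles and is never consumed again once internal (it leaves $E_F$ and discovery only consumes $E_F$-edges while fresh edges are incident only to newly added robots), which is what forces $\Delta_2=\Delta_0$.
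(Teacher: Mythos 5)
Your proof is correct and takes essentially the same route as the paper: both rest on the facts that a triangle's owner is the vertex opposite the absorbed frontier edge and that triangle edges are communication links, with Lemma~\ref{OwnerInFrontierEdge} applied to the shared edge at the moment it was still a frontier edge. Your reorganization (casing on whether an owner is an endpoint of the shared edge and ruling out the double-apex case by contradiction) is, if anything, a slightly more careful rendering of the paper's two-case argument about which robot was the navigating robot.
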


\begin{proof}
Let $\Delta abc$ and $\Delta abd$ be the two adjacent triangles, and $\{a,b\}$ be the edge they share.
These two triangles can be formed in the following two ways (in the expanding state):
1) Robot $a$ was the navigation robot. Then $a$ is the owner for both $\Delta abc$ and $\Delta abd$. $a$ is connected to itself.  2) Robot $d$ was the navigation robot. This makes $\Delta abc$ an existing triangle and $\{a,b\}$ a frontier edge in the expanding state. $d$ is also the owner robot for $\Delta abd$ in the expanded state. Either $a$ or $b$ is the owner of $\Delta abc$ by Lemma~\ref{OwnerInFrontierEdge}, so $d$, the owner of $\Delta abd$, must be connected to the owner of $\Delta abc$ through either edge $\{a,d\}$ or edge $\{b,d\}$.
By symmetry, $b$ is equivalent to $a$ and $c$ is equivalent to $d$.
\end{proof}

%

\subsection{Dual Graph Navigation}
\label{sec:DualGraphNavigation}

We use the dual graph as a navigation guide for robots in our
triangulation. If the destination triangle is known, such as a frontier triangle, then a broadcast message can be used to build a BFS tree suitable for navigation\cite{li_navigation_2005}.  Our
previous work shows there is no lower bound on the competitive factor of the
stretch of a path in the online MATP problem~\cite{fkk+-etrsr-11}, but this
requires narrow corridors of infinitesimal width.
In the following, we show that more realistic assumptions do allow constant-factor performance.

Let $r_{\text{max}}$ be the maximum length of a triangulation edge.
We also consider a lower bound of $r_{\text{min}}$ on the length of the shortest edge in the triangulation; in particular,
we assume that the local construction ensures that any non-boundary edge is long enough to let a robot pass between the two robots marking the vertices
of the edge, so $r_{\text{min}}\geq 2\delta$, where $\delta$ is the diamater of a robot. (The practical validity of these assumptions for
a real-world robot platform will be shown in the experimental Section~\ref{sec:ExperimentalResults}.)
Finally, angular measurements of neighbor positions let us guarantee a minimum angle of $\alpha$ in all triangles.  These constraints give rise to the following:

\begin{definition}
\label{def:alphaFat}
Let $\mathcal T$ be a triangulation of a planar region $\mathcal R$, with vertex set $V$. $\mathcal T$ is $(\rho,\alpha)${\em -fat}, if
it satisfies the following properties:
\begin{itemize}
\item The ratio $r_{\text{max}}/r_{\text{min}}$  of longest to shortest edge in $\mathcal T$ is bounded by some positive $\rho$.
\item All angles in $\mathcal T$ have size at least $\alpha$.
\end{itemize}
\end{definition}

\noindent This definition is used to prove properties of triangulations.

\subsubsection{Covered Area}
\label{subsec:Covered}

\begin{theorem}
Consider a $(\rho,\alpha)$-fat triangulation of a set $V$ with $n$ vertices,
with maximum edge length $r_{max}$ and minimum edge length $r_{min}$.
Then the total triangulated area is within $\sqrt{3}\rho^2/2sin(\alpha)$
of the optimum.
\end{theorem}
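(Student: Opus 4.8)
The plan is to prove the bound by a triangle-counting argument built on two elementary extremal facts about a single triangle. The first is an upper bound on the area of any triangle in a feasible triangulation --- in particular in an optimal one --- whose edges all have length at most $r_{\text{max}}$: fixing the longest edge at length $c\le r_{\text{max}}$, the opposite vertex lies in the intersection of the two radius-$r_{\text{max}}$ disks centred at that edge's endpoints, so its distance to the edge is at most $\sqrt{r_{\text{max}}^2-c^2/4}$; the resulting area bound $\tfrac12 c\sqrt{r_{\text{max}}^2-c^2/4}$ is increasing in $c$ on $[0,r_{\text{max}}]$, hence maximal at $c=r_{\text{max}}$, giving $\tfrac{\sqrt3}{4}r_{\text{max}}^2$ (the equilateral triangle). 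The second is a matching lower bound for the triangles of our $(\rho,\alpha)$-fat triangulation: if $\beta$ is the smallest angle of such a triangle then $\alpha\le\beta\le\pi/3$, the two edges incident to $\beta$ have length at least $r_{\text{min}}$, so the area is at least $\tfrac12 r_{\text{min}}^2\sin\beta\ge\tfrac12 r_{\text{min}}^2\sin\alpha$, using monotonicity of $\sin$ on $[\alpha,\pi/3]$.

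Writing $T^{*}$ and $T$ for the number of triangles in the optimal and in our triangulation, these estimates give $A^{*}\le T^{*}\cdot\tfrac{\sqrt3}{4}r_{\text{max}}^2$ and $A\ge T\cdot\tfrac12 r_{\text{min}}^2\sin\alpha$, so it remains to relate $T^{*}$ and $T$. Both are triangulations of a planar region on the same set of $n$ vertices, and by Euler's formula a triangulation on $n$ vertices has $2n-h-2$ triangles, where $h\ge 3$ is the number of boundary vertices; since our algorithm stops only when the workspace is exhausted (in which case $A=A^{*}$ and we are done) or when all $n$ robots have been placed, each non-base robot owning at least one triangle, I would use this to conclude $T^{*}\le T$. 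Chaining everything,
\[
A^{*}\ \le\ T^{*}\,\frac{\sqrt3}{4}r_{\text{max}}^2\ \le\ T\,\frac{\sqrt3}{4}r_{\text{max}}^2\ \le\ \frac{\sqrt3}{4}r_{\text{max}}^2\cdot\frac{2A}{r_{\text{min}}^2\sin\alpha}\ =\ \frac{\sqrt3\,\rho^2}{2\sin\alpha}\,A ,
\]
since $\rho=r_{\text{max}}/r_{\text{min}}$, which is the claimed bound.

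The two extremal area estimates are routine; the part I expect to be the main obstacle is the comparison of the two triangle counts. A thin region forces a near fan-shaped triangulation with only about $n$ triangles while a fat region admits about $2n$, so without extra hypotheses $T^{*}$ could exceed $T$ and the constant would degrade --- and this is precisely where $(\rho,\alpha)$-fatness together with $r_{\text{min}}\ge 2\delta$ is needed, since it rules out the infinitesimally narrow corridors that destroy a bounded competitive ratio in general. I would use the resulting edge-length and angle constraints (which also bound how many triangles can meet at a single vertex) to argue that the optimal triangulation cannot pack more triangles into the same $n$-vertex budget than ours does.
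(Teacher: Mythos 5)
Your two extremal estimates are exactly the content of the paper's own proof: the published argument is, in its entirety, that every edge has length at least $r_{min}$ and every angle is at least $\alpha$, so that (by trigonometry) each triangle of the fat triangulation has area at least $\tfrac12 r_{min}^2\sin\alpha$, while no triangle with edges bounded by $r_{max}$ can exceed the equilateral area $\tfrac{\sqrt3}{4}r_{max}^2$; the quotient of these two quantities is the stated constant $\sqrt3\rho^2/(2\sin\alpha)$. So in the per-triangle part you are following the same route as the paper, just with the routine calculations written out.

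The step you single out as the main obstacle --- comparing the triangle counts $T^{*}$ and $T$ --- is simply not addressed in the paper; the one-line proof implicitly treats the comparison as per-triangle. Your proposed fix, $T^{*}\le T$, is the genuinely shaky link and, as you half-concede yourself, does not follow from the ingredients you cite: Euler's formula gives $2n-h-2$ triangles for \emph{both} triangulations, so everything hinges on the respective numbers of boundary vertices, and ``every non-base robot owns at least one triangle'' only yields $T\ge n-2$, whereas $T^{*}$ can be as large as $2n-5$ (your own fan-versus-fat example). Fatness of \emph{our} triangulation does not constrain the optimal one, so the inequality $T^{*}\le T$ can fail and would cost up to a factor of $2$ in the constant. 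In short: your proof matches the paper's approach and is honest about the gap, but the gap is real in your write-up --- and it is one the paper's proof glosses over rather than closes, since the authors evidently intend only the per-triangle trigonometric comparison.
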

\begin{proof}

Each edge has length at least $r_{min}$, and any angle is bounded from below by $\alpha$. The claim follows by trigonometry.
\end{proof}

Note that in a practical setting, $\rho$ will be much smaller than the theoretically possible worst case;
see Fig.~\ref{fig:SF} for a real-world evaluation.

\subsubsection{Path Stretch}
\label{subsec:PathStretch}

Now we establish that the dual graph of our triangulations can be exploited for provably good routing. We make use of the following terminology.

\begin{definition}
Consider a triangulation $\mathcal T$ of a planar region $\mathcal R$, with vertex set $V$. Let $s,g$ be points in $\mathcal R$
 and let $p(s,g)$ be a polygonal path in $\mathcal R$ that connects $s$ to $g$; let $d_{p}(s,g)$ be its length.
Let $\Delta_s$ and $\Delta_g$ be the triangles containing $s$ and $g$, respectively, and let
${D}(s,g):=\Delta_s, \Delta_1,\ldots,\Delta_{\ell},\Delta_g$ be a shortest path in the dual graph
of $\mathcal T$. Then a $\mathcal T${\em -greedy} path between $s$ and $g$ is a path $s,q_1,\ldots,q_{\ell},g$, such that
$q_i\in\Delta_i$, and consecutive vertices of the path are connected by a straight line.
\end{definition}


\begin{figure}[t]
\renewcommand{\figDim}{1.1}
\centering
  \subfloat[][ ]{
  \label{fig:greedy}
  \includegraphics[width = .6\linewidth]{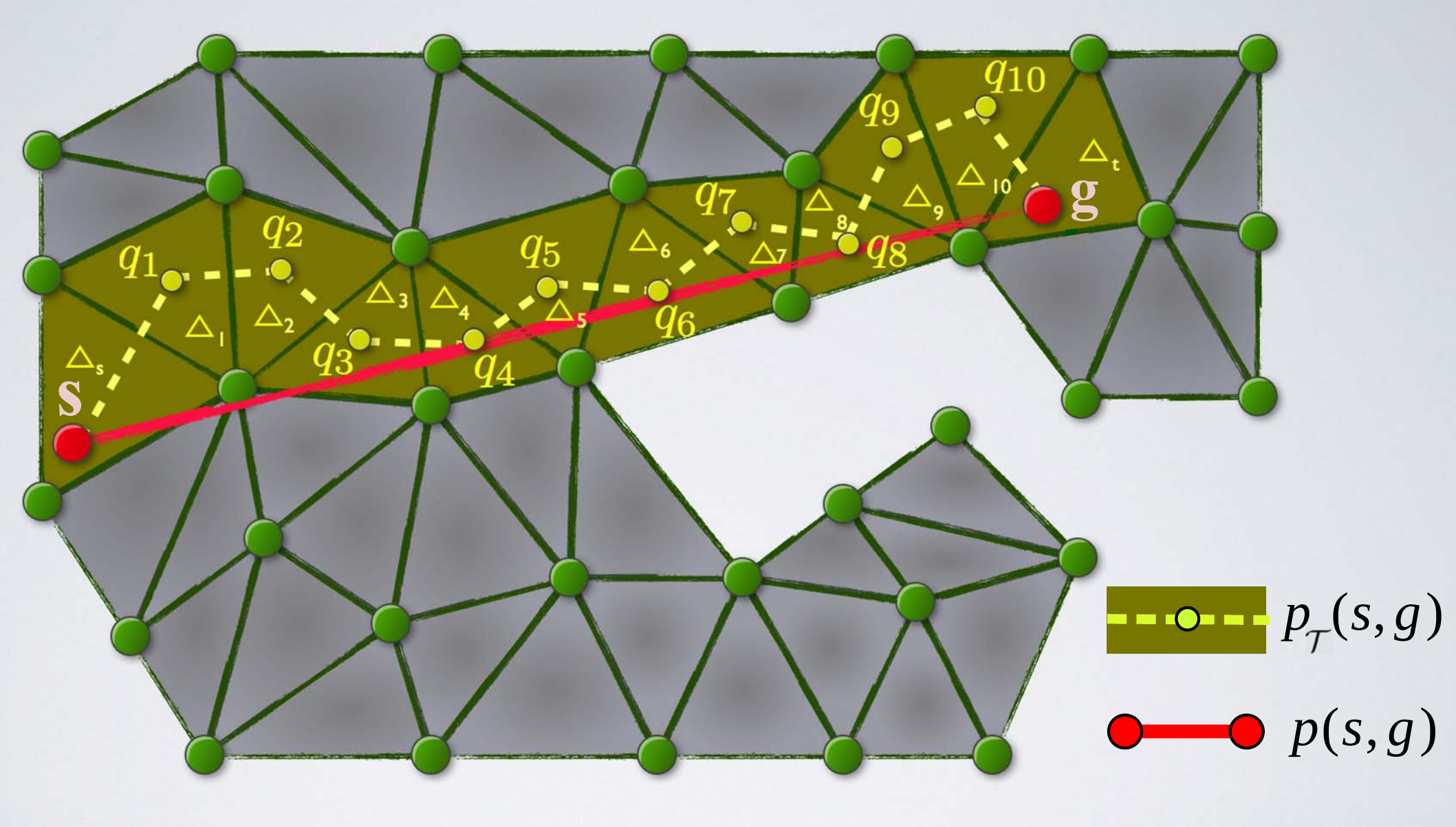}}
  \hspace{2pt}
  \subfloat[][ ]{
  \label{fig:cut}
  \includegraphics[width=.32\linewidth]{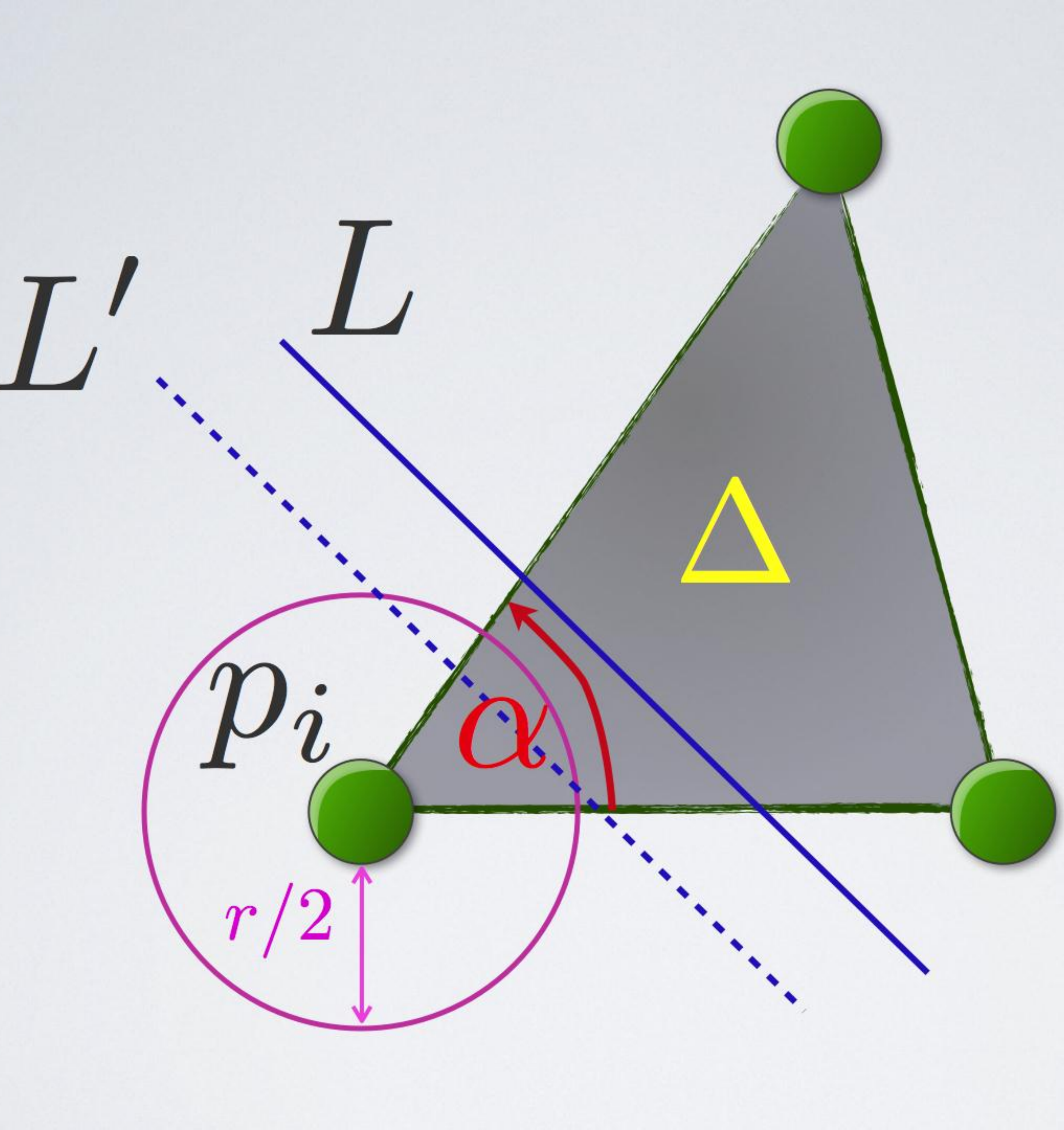}}
\caption{\textbf{\protect\subref{fig:greedy}} A shortest $s$,$g$-path (shown in red) in a region covered by a triangulation $\mathcal T$. The resulting $\mathcal T$-greedy path is depicted in yellow; a shortest dual path is indicated by colored triangles. Note that each point $q_i$ may be {\em anywhere} in the respective triangle $\Delta_i$.
\textbf{\protect\subref{fig:cut}} A triangle $\Delta$ is intersected by a straight line $L$. If $L$ passes the triangle not too close to one of the endpoints,
the length of the intersection is long. If the line passes the triangle close to one of the endpoints (indicated by the dashed line $L'$),
then the intersection with a circle of radius $r_{min}/2$ must be long.
}
\end{figure}

In other words, a $\mathcal T$-greedy path between $s$ and $g$ builds a short connection in the dual graph
of the triangulation, and then goes from triangle to triangle along straight segments. Note that we do not make any assumptions whatsoever concerning where we visit each of the triangles.

\begin{lemma}
Consider a $(\rho,\alpha)$-fat triangle $\Delta$ with minimum edge length at least $r_{min}$;
let $\Delta$ be intersected by a straight line $L$.
Then the total length of the intersection of $L$ and $\Delta$ is at least $\frac{r_{min}}{2\sin(\alpha/2)}$,
or the length of the intersection of $L$ with the $r_{min}/2$-disk around one of $\Delta$'s vertices
is at least $\frac{r_{min}}{2\sin(\alpha/2)}$.
\end{lemma}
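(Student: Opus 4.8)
The plan is to distinguish the two situations drawn in Fig.~\ref{fig:cut}: either $L$ crosses $\Delta$ well away from every vertex, in which case the segment $L\cap\Delta$ is already long by itself, or $L$ grazes a corner, in which case the chord of $L$ with the small disk around that corner is long. First I would fix notation for the generic situation in which $L$ meets two sides of $\Delta$, thereby separating one vertex $v$ from the other two. Let $P_1,P_2$ be the points where $L$ crosses the boundary, with $P_1$ on one side incident to $v$ and $P_2$ on the other; put $a=|vP_1|$, $b=|vP_2|$, and let $\gamma\ge\alpha$ be the interior angle at $v$. Fix a threshold $\tau<r_{\min}/2$ (to be pinned down at the end), and declare that ``$L$ grazes $v$'' means $\min(a,b)<\tau$. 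The degenerate cases --- $L$ through a vertex, or $L$ running along a side --- are limiting cases of what follows and can be checked by inspection.

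In the non-grazing case $a\ge\tau$ and $b\ge\tau$. Since $\Delta$ is convex, $L\cap\Delta$ is exactly the segment $P_1P_2$, and the law of cosines, rewritten as $|P_1P_2|^2=(a-b)^2+4ab\sin^2(\gamma/2)$, gives $|L\cap\Delta|=|P_1P_2|\ge 2\sqrt{ab}\,\sin(\gamma/2)\ge 2\tau\sin(\alpha/2)$, using $\gamma\ge\alpha$ and $\gamma/2<\pi/2$.

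In the grazing case, say $a=\min(a,b)<\tau$, so $P_1$ lies inside the disk $B$ of radius $r_{\min}/2$ centred at $v$. The perpendicular distance from $v$ to $L$ is at most the distance from $v$ to the point $P_1\in L$, so $\mathrm{dist}(v,L)\le a<\tau<r_{\min}/2$; hence $L$ genuinely crosses $B$, and the chord has length $2\sqrt{(r_{\min}/2)^2-\mathrm{dist}(v,L)^2}\ge 2\sqrt{(r_{\min}/2)^2-\tau^2}$. Choosing $\tau=\frac{r_{\min}/2}{\sqrt{1+\sin^2(\alpha/2)}}$ makes the two lower bounds $2\tau\sin(\alpha/2)$ and $2\sqrt{(r_{\min}/2)^2-\tau^2}$ coincide and yields the lemma; any $\tau=\Theta(r_{\min})$ already gives a bound of the same order $\Theta(r_{\min}\sin(\alpha/2))$, which is all the path-stretch argument in Section~\ref{subsec:PathStretch} needs.

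The heart of the argument is the grazing case, and the point to be careful about is that the quantity governing the disk-chord is $\mathrm{dist}(v,L)$, which we bound by the \emph{edge-offset} $a$ rather than by the chord length $|P_1P_2|$ --- this is exactly what lets a single threshold $\tau$ cover both cases --- together with the check $\tau<r_{\min}/2$ so that $B$ is actually crossed; everything else is elementary trigonometry. A secondary point, relevant when the lemma is applied along an entire $\mathcal T$-greedy path, is that the disks attached to grazing triangles are charged to triangle vertices, and since each vertex belongs to only a bounded number of triangles this overlap costs just a constant factor.
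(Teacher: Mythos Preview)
Your argument follows the same two-case strategy as the paper---split on whether $L$ passes close to a vertex, bound the in-triangle chord by trigonometry in the far case and the disk chord by Pythagoras in the near case---so the approaches coincide in spirit. The difference is in parameterization: the paper thresholds on the perpendicular distance from the nearest vertex to $L$, whereas you threshold on the edge-offset $a=\min(|vP_1|,|vP_2|)$ and then bound $\mathrm{dist}(v,L)\le a$. Your choice is the cleaner one, because it makes the law-of-cosines step in the far case immediate and simultaneously guarantees $\mathrm{dist}(v,L)<r_{\min}/2$ in the near case, so the disk is genuinely crossed; the paper's stated threshold $2r_{\min}\cos(\alpha/2)$ does not obviously ensure this. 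Note also that neither argument reaches the constant $\tfrac{r_{\min}}{2\sin(\alpha/2)}$ printed in the lemma (indeed this can exceed the disk's diameter $r_{\min}$ when $\alpha<\pi/3$): the paper's own proof concludes with $2r_{\min}\sin(\alpha/2)$, and your optimized $\tau$ yields $\tfrac{r_{\min}\sin(\alpha/2)}{\sqrt{1+\sin^2(\alpha/2)}}$. Both are $\Theta(r_{\min}\sin(\alpha/2))$, which, as you correctly observe, is all that Theorem~\ref{thm:stretch} requires.
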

%

\begin{proof}
Refer to Fig.~\ref{fig:cut}.
Consider the closest distance between $L$ and one of the vertices of $\Delta$. If this is larger than $2r_{min}{\cos(\alpha/2)}$,
then we see from Pythagoras' theorem that the intersection of $L$ and $\Delta$
must have length at least $2r_{min}{\sin(\alpha/2)}$.
Otherwise the distance is at most $2r_{min}{\cos(\alpha/2)}$, and
the intersection of $L$ with the $r_{min}/2$-disk around
the closest vertex of $\Delta$ must have length at least $2r_{min}{\sin(\alpha/2)}$.
\end{proof}

With this, we can proceed to the proof of the theorem.

\begin{theorem}
\label{thm:stretch}
Consider a $(\rho,\alpha)$-fat triangulation $\mathcal T$ of a planar region $\mathcal R$, with vertex set $V$, maximum and minimum
edge length $r_{max}$ and $r_{min}$, respectively.
Let $s,g$ be points in $\mathcal R$ that are separated by at least one triangle, i.e., the triangles $\Delta_s$, $\Delta_g$
in $\mathcal T$ that contain $s$ and $g$ do not share a vertex.
Let $p(s,g)$ be a shortest polygonal path in $\mathcal R$ that connects $s$ with $g$, and
let $d_{p}(s,g)$ be its length.
Let ${p_{\mathcal T}}(s,g)$ be a $\mathcal T$-greedy path
between $s$ and $g$, of length $d_{p_{\mathcal T}}(s,g)$. Then
$d_{p_{\mathcal T}}(s,g)\leq c \cdot d_{p}(s,g)+2$,
for $c=\lfloor\frac{2\pi}{\alpha}\rfloor\frac{\rho}{\sin(\alpha/2)}$,
and $d_{p_{\mathcal T}}(s,g)\leq c' \cdot d_{p}(s,g)$,
for $c'=\lfloor\frac{6\pi}{\alpha}\rfloor\frac{\rho}{\sin(\alpha/2)}$.
\label{theoremSF}
\end{theorem}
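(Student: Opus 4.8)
The plan is to decouple the two sides of the inequality: bound the $\mathcal T$-greedy path from above by the dual distance $\ell+1$, and bound the shortest polygonal path $p(s,g)$ from below by the same quantity, using the preceding lemma as the only geometric input. For the upper bound, note that $p_{\mathcal T}(s,g)$ consists of the $\ell+1$ straight segments $[s,q_1],[q_1,q_2],\dots,[q_\ell,g]$, and each such segment joins two points lying in a pair of edge-adjacent triangles of $\mathcal T$; connecting those two points through a shared endpoint of the common edge and applying the triangle inequality, every segment has length at most $2r_{\text{max}}$. Hence $d_{p_{\mathcal T}}(s,g)\le 2(\ell+1)\,r_{\text{max}}\le 2(\ell+1)\,\rho\, r_{\text{min}}$ by $(\rho,\alpha)$-fatness.

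For the lower bound I would argue that $p=p(s,g)$ must be long because it traverses many triangles. Since $D(s,g)$ is a \emph{shortest} path in the dual graph, after an arbitrarily small perturbation making all of $p$'s crossings of triangulation edges transversal and missing all triangulation vertices, $p$ crosses at least $\ell+1$ triangulation edges and therefore passes through a sequence $T_0=\Delta_s,\dots,T_m=\Delta_g$ with $m+1\ge\ell+2$; write $p$ as the concatenation of pieces $\tau_0,\dots,\tau_m$ with $\tau_i\subseteq T_i$. Discard the two (possibly degenerate) end pieces and, after subdividing each remaining $\tau_i$ at the reflex boundary vertices of $\mathcal R$ where it bends so that the pieces become genuine segments (a bounded number per triangle), apply the preceding lemma to the $\ge\ell$ remaining pieces. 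Each piece is either \emph{fat}, in the sense that $|\tau_i|$ already meets the length bound of the lemma, or \emph{thin}, in which case the lemma produces a vertex $v_i$ of $T_i$ whose $r_{\text{min}}/2$-disk $B(v_i)$ meets $p$ in a sub-curve of at least that length; a thin index is charged to $v_i$. The fat pieces are pairwise disjoint sub-curves of $p$; distinct triangulation vertices are $\ge r_{\text{min}}$ apart, so the disks $B(v)$ are interior-disjoint and each charged disk carries a length of $p$ at least the lemma bound; and every vertex is incident to at most $\lfloor 2\pi/\alpha\rfloor$ triangles, while minimality of $p$ prevents it from re-entering the neighborhood of a single vertex, so each vertex is charged at most $\lfloor 2\pi/\alpha\rfloor$ times. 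Combining the disjoint fat pieces with the disjoint charged disks (the mild overlap between a fat piece and a disk costs at most a factor $2$) yields $d_p(s,g)\ge\tfrac{\ell}{\lfloor 2\pi/\alpha\rfloor}\cdot(\text{lemma bound})/2$, i.e. $\ell=O\!\bigl(\lfloor 2\pi/\alpha\rfloor\, d_p(s,g)\,/\,(r_{\text{min}}\sin(\alpha/2))\bigr)$.

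Substituting the bound on $\ell$ into $d_{p_{\mathcal T}}(s,g)\le 2\rho r_{\text{min}}(\ell+1)=2\rho r_{\text{min}}\ell+2\rho r_{\text{min}}$, the $\ell$-term becomes at most $\lfloor 2\pi/\alpha\rfloor\frac{\rho}{\sin(\alpha/2)}\,d_p(s,g)$ once the constant of the preceding lemma is tracked through, and the residual $2\rho r_{\text{min}}=2r_{\text{max}}$ is the additive term (equal to $2$ under the normalization $r_{\text{max}}=1$); this gives the first inequality with $c=\lfloor 2\pi/\alpha\rfloor\frac{\rho}{\sin(\alpha/2)}$. For the purely multiplicative statement, the hypothesis that $\Delta_s$ and $\Delta_g$ share no vertex forces the dual distance to be at least $3$ and $p$ to cross at least one triangle completely, so $d_p(s,g)$ is bounded below by a fixed multiple of $r_{\text{min}}$; absorbing the additive term into $d_p(s,g)$ this way costs a further bounded factor, accounted for by the coefficient $\lfloor 6\pi/\alpha\rfloor$ in $c'$.

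The step I expect to be the real work is the bookkeeping in the lower bound: ensuring that the ``fat-piece'' length and the ``near-vertex disk'' length are not double counted, and that a shortest path cannot pathologically re-enter the same triangle or wind around a single vertex, so that the per-vertex charge is genuinely at most $\lfloor 2\pi/\alpha\rfloor$. The remaining technicalities — the transversality perturbation, and the fact that a geodesic of $\mathcal R$ is only piecewise linear because it may bend at reflex boundary vertices (which are triangulation vertices) — affect only the constants and are quick to dispatch.
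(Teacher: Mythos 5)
Your proposal is correct and follows essentially the same route as the paper's proof: you bound each greedy-path segment by $2r_{\max}$ via a shared vertex of consecutive triangles, and you lower-bound $d_p(s,g)$ by the same charging scheme (each crossed triangle is charged either its own intersection length or, via the lemma, a share of the intersection with an $r_{\min}/2$-disk around a nearby vertex, with per-vertex multiplicity $\lfloor 2\pi/\alpha\rfloor$ from the minimum-angle bound), then trade the additive $O(r_{\max})$ term for the larger constant $c'$ using the fact that vertex-disjointness of $\Delta_s,\Delta_g$ forces the path to cross at least one full triangle. Your bookkeeping of the fat-piece/disk double counting and the additive term's normalization by $r_{\max}$ is, if anything, slightly more careful than the paper's own constant tracking.
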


\begin{proof}
Consider ${p}(s,g)$, triangles $\Delta_s$, $\Delta_g$ and the sequence $\Delta_1,\ldots,\Delta_{\ell'}$
of $\ell'$ other triangles intersected by it; by assumption,
$\ell'\geq\ell\geq 1$, where $\ell$ is the number of triangles contained in ${p_{\mathcal T}}(s,g)$.
Furthermore, note that the disjointness of $\Delta_s$, $\Delta_g$ implies
$d_{p}(s,g)\geq r_{min}$.

We first show that $d_{p}(s,g)\geq \ell'\lfloor\frac{2\pi}{\alpha}\rfloor\frac{\rho}{2\sin(\alpha/2)}$.
For this purpose, charge the intersection of ${p}(s,g)$ with $\Delta_i$ to $\Delta_i$, if its length is
at least $\frac{\rho}{2\sin(\alpha/2)}$; if it is shorter, we charge the length of the intersection of
${p}(s,g)$ with the $r_{min}/2$-disk around one of $\Delta$'s vertices $p_j$ evenly to all of the triangles $\Delta_i$
that are incident to $p_j$. Because the minimum angle in a triangle is bounded from below by $\alpha$, the preceding lemma implies
the lower bound on the length of $d_{p}(s,g)$.

On the other hand, it is straightforward to see that no edge in a $T$-greedy $s,g$-path can be longer than $2r_{max}$.
Therefore, $d_{p_{\mathcal T}}(s,g)\leq 2(\ell+2)r_{max}$. Comparing the lower bound on $d_{p}(s,g)$ and the upper
bound on $d_{p_{\mathcal T}}(s,g)$ yields the claim $d_{p_{\mathcal T}}(s,g)\leq c \cdot d_{p}(s,g)+2$
with $c$ as stated. The additive term of 2 results from the $s$ and $g$ possibly being close to the boundaries of
$\Delta_s$ and $\Delta_g$, respectively; it can be removed by noting that $\ell'\geq\ell\geq 1$ implies
$(\ell+2)\leq 3\ell'$, as indicated by the second comparison and the choice of $c'$.
\end{proof}

This provides constant stretch factors even under minimal, purely theoretical and highly pessimistic assumptions.  The practical performance in real-world settings (where the greedy paths
do not visit worst-case points in the visited triangles) is considerably better, as we demonstrate in Section~\ref{sec:ExperimentalResults}.

\section{Implementation}
\label{sec:Implementation}
%
%
%

%
A high-level finite-state machine of our implementation of  triangulation construction is shown in Fig.~\ref{fig:MATFSM}.
Two robots are initialized in the Frontier-Wall state and placed at the base-edge. All other robots begin behind the base edge in the Navigation state. Table~\ref{table_hf} lists helper functions for all algorithms below.

\begin{figure}[t]
  \renewcommand{\figheight}{1.6in}
  \centering
  \includegraphics[width=\linewidth]{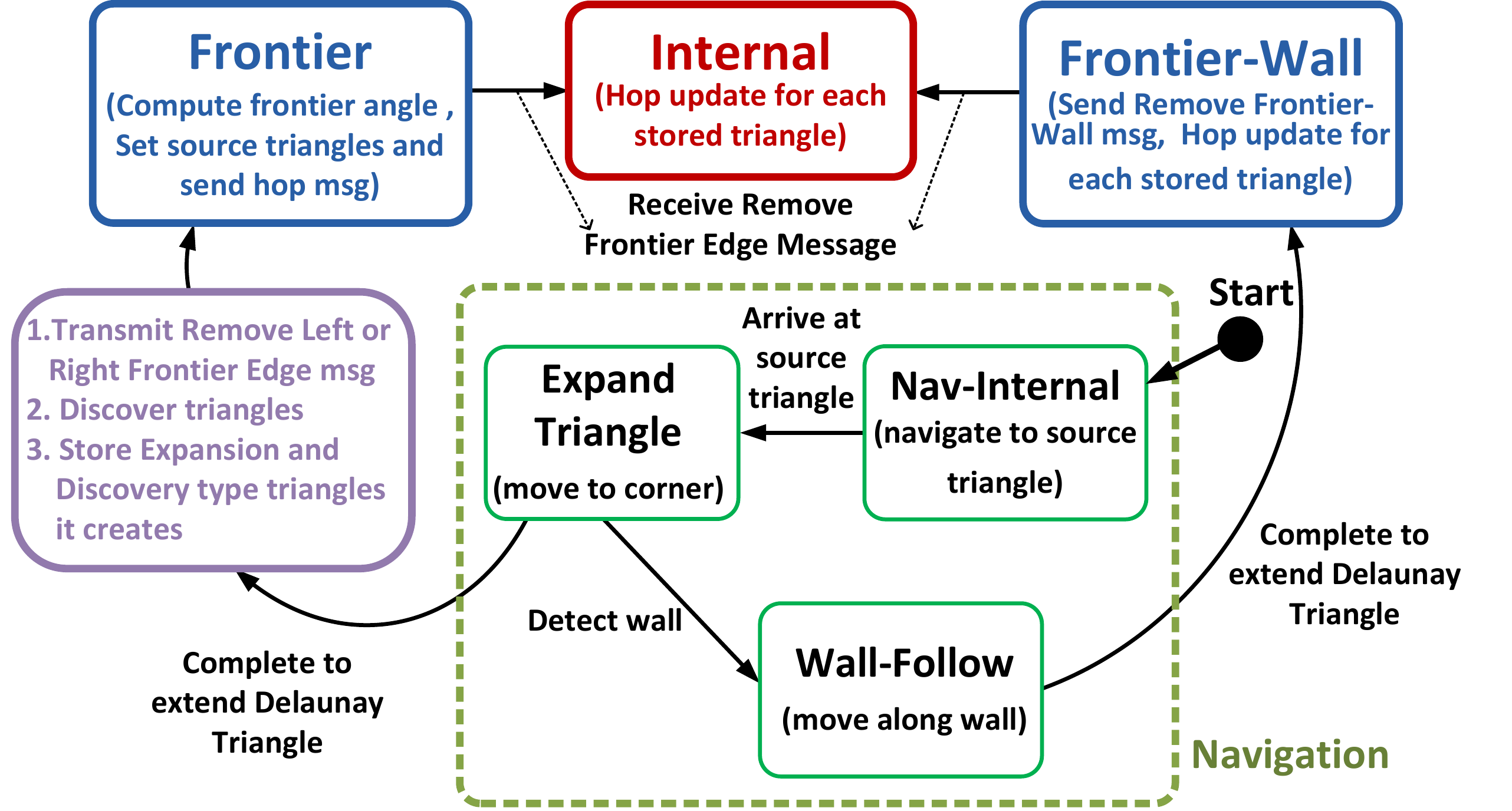}
  \caption{
    \label{fig:MATFSM}
    Finite state machine for MAT algorithm. }
\end{figure}

%



\subsection{Navigation State}

The navigation contains three states; \emph{Nav-Internal}, \emph{Expand-Triangle}, and~\emph{Wall-Follow}. A new robot, $u$, enters the network in the Nav-Internal state, and runs algorithm~\ref{alg:NavigatingState-NavInternal} to navigate to a frontier triangle.  Line 2 runs an occupancy test function, shown in Fig.~\ref{fig:occupancyTest}, that returns the current triangle, $T_c$, that contains robot $u$, and its owner, $o$. If $T_c$ is a non-frontier triangle, then $u$ moves to an adjacent triangle that is closer to (fewer hops from) the frontier (line 10 to 11).
Theorem~\ref{ConnectedOwners} ensures that the owner of $T_c$ is connected to owners of adjacent triangles, so $u$ learns the hops of all adjacent triangles with a 2-hop message similar to the geometry message from Fig.~\ref{fig:TriangleAngleDefinition}.
If $T_c$ is a frontier triangle (line 3) or null (only true if $u$ has just crossed the base edge, line 6), then $u$ will create a new triangle. The variables $u.L$ and $u.R$ are set to the left and right neighbors of the frontier edge (line 4 and 7), and the robot changes its state to Expand-Triangle (line 5 and 8).

\begin{algorithm}
\footnotesize
\caption{\proc{Nav-Internal}
\label{alg:NavigatingState-NavInternal}
}
\begin{algorithmic}[1]
\While{$u.state$ =\emph{Navigate-Internal}}
	\State $\mathcal T_c \gets \proc{GetCurrentTriangle()}$
	\If { $\proc{IsFrontierTriangle}(\mathcal T_c)$}
		\State $(u.L,u.R) \gets  \proc{GetFrontierEdgeNbr}(\mathcal T_c) $
		\State $u.state$ $\gets$ \emph{Expand-Triangle}	
	\ElsIf{$\proc{IsOnlyBaseEdge}(N(u))$}
		\State $(u.L,u.R) \gets  \proc{GetBaseEdgeNbr}() $
		\State $u.state$ $\gets$ \emph{Expand-Triangle}
	\Else
		\State $\mathcal T_{next} \gets$  $\proc{GetMinHopAdjTri}(\mathcal T_c)$
		\State $\proc{MoveToNextTriangle}(\mathcal T_{next})$
	\EndIf
\EndWhile
\end{algorithmic}
\end{algorithm}
\setlength{\intextsep}{0pt}
%
%

Once in the Expand-Triangle state, $u$ runs algorithm~\ref{alg:NavigatingState-Expand}. Line 2 computes the left and right inner angles to the frontier neighbors, $\theta_L$ and $\theta_R$. Line 3 then runs the triangle-expansion controller illustrated in Fig.~\ref{fig:vectorField} until $u$ is in region 3.

We lack the space here for a complete description of the controller, we sketch its operation here.  When robot $u$ enters region 3, if $\theta_L > \theta_R$, $u$ first moves toward $B_u(u.L)+\pi$ until $\theta_R \geq \frac{\pi}{3}$.  It then changes its heading toward $B_u(u.R)+\pi$, and moves until it reaches the goal region (region 4).  The opposite control happens when $\theta_L < \theta_R$.

Robot $u$ stores the triangle on its list (line 5), runs the Discover Triangle procedure to discover all adjacent triangles as described in section~\ref{discoveryTri} (line 6). The \emph{Frontier angle}, $\theta_F$, provides a simple way to evaluate the quality of candidate triangles; we define a triangle to be high-quality if $\theta_F < k$, with $k$ manually tuned to reduce errors.  After adding triangles, $u$ updates its frontier neighbors, those of $u.L$ and $u.R$ (line 7), adds new frontier neighbors in $u^{new}$, and disconnects frontier neighbors in $u^{old}$.  


%
%
If $u$ detects a wall while expanding a triangle (line 12), it changes its state to Wall-Follow (line 13). This controller moves $u$ along the wall until it forms an isosceles triangle.  Then $u$ stores the triangle, and broadcasts disconnect message to $u.L$ or $u.R$, and changes its state to Frontier-Wall.
%
\begin{table}[!t]
\scriptsize
\renewcommand{\arraystretch}{1}
\caption{Table of Helper Functions}
\label{table_hf}
\centering
\hspace{-.2cm}
\begin{tabular}{|l||l|}
\hline
$GetCurrentTriangle()$ & \parbox{\tablelength}{Runs occupancy test and returns current triangle, $T_c$.}\\
\hline
$GetMinHopAdjTri(\mathcal T_c)$ & \parbox{\tablelength}{Get $T_c$'s min-hop adjacent triangle.}\\
\hline
$DiscoverTriangle(u.L, u.R)$ & \parbox{\tablelength}{ Runs discovery procedure and gets discovery triangles, $T_D$, and list of $u$'s old and new frontier neighbors.} \\
\hline
$IsIsoscelesTriangle(u.L, u.R)$ & \parbox{\tablelength}{Checks if $\theta_L = \theta_R$  in an expand triangle.}\\
\hline
$GetFrontierWallNbr(u.L, u.R)$ & \parbox{\tablelength}{Returns $u.L$ or $u.R$ in frontier-wall state.}\\
\hline
$BCastFMsg(u^{old}, u^{new})$ & \parbox{\tablelength}{Broadcast new frontier msg to nbrs $\in u^{new}$.}\\
\hline
$RecvFMsg(u^{old}, u^{new})$ & \parbox{\tablelength}{Receive new frontier nbrs.}\\
\hline
$UpdateFNbr(u^{old}, u^{new})$ & \parbox{\tablelength}{ Change frontier nbr from  $u^{old}$ to $u^{new}$.}\\
\hline
$BCastDisconnectMsg(u^{old})$ & \parbox{\tablelength}{Broadcast disconnect msg to nbrs $ \in u^{old}$}. \\
\hline
$RecvDisconnectMsg()$ & \parbox{\tablelength}{ Return $u_{sender}$ if $u_{sender}$ disconnects $u$.}\\
\hline
$IsContainFrontierEdge(T_i)$ & \parbox{\tablelength}{Checks if $T_i$ has a frontier edge.}\\
\hline
$UpdateTriangleHop()$ & \parbox{\tablelength}{For each triangle $u$ owns, sets its hop to 1 + minimum among all adjacent triangles' hops.} \\
\hline
$BCastTriangleHop(N(o))$ &  \parbox{\tablelength}{Broadcast all hops of all triangles $u$ owns.}\\
\hline
\end{tabular}
\end{table}


\begin{algorithm}
\footnotesize
\caption{\proc{Expand-Triangle}
\label{alg:NavigatingState-Expand}
}
\begin{algorithmic}[1]
\While{$u.state$ = \emph{Expand-Triangle}}
	\State $(\theta_{L}, \theta_{R}) \gets \proc{GetInnerAngle}(u.L,u.R)$
	\State $\proc{TriangleExpansionController}(\theta_L, \theta_R)$
	\If {$\proc{IsInGoalRegion}$$(\theta_L, \theta_R)$}
		\State $\proc{StoreTrianglesToList}$ ($ \Delta uu.Lu.R$)
		\\ \ \ \  $(u^{old}, u^{new}, \mathcal T_D) \gets  \proc{DiscoverTriangle}(u.L, u.R) $
		\State $\proc{UpdateFNbr}(u^{old}, u^{new}) $
		\State $\proc{BCastFMsg}$($u^{old}$, $u^{new}$)
		\State $\proc{BCastDisconnectMsg}$($u^{old}$)			
		\State $\proc{StoreTrianglesToList}$ ($\mathcal T_D$)
		\State $u.state \gets$ \emph{Frontier}
	\ElsIf {\proc{IsWallDetected()}}
		\State $u.state$ $\gets$ \emph{Wall-Follow}	
	\EndIf
\EndWhile
\end{algorithmic}
\end{algorithm}

\begin{figure}[t]
  \renewcommand{\figDim}{1.1}
  \centering
  \subfloat[][Triangle Occupancy]{
  \label{fig:occupancyTest}
  \includegraphics[width=.35\linewidth]{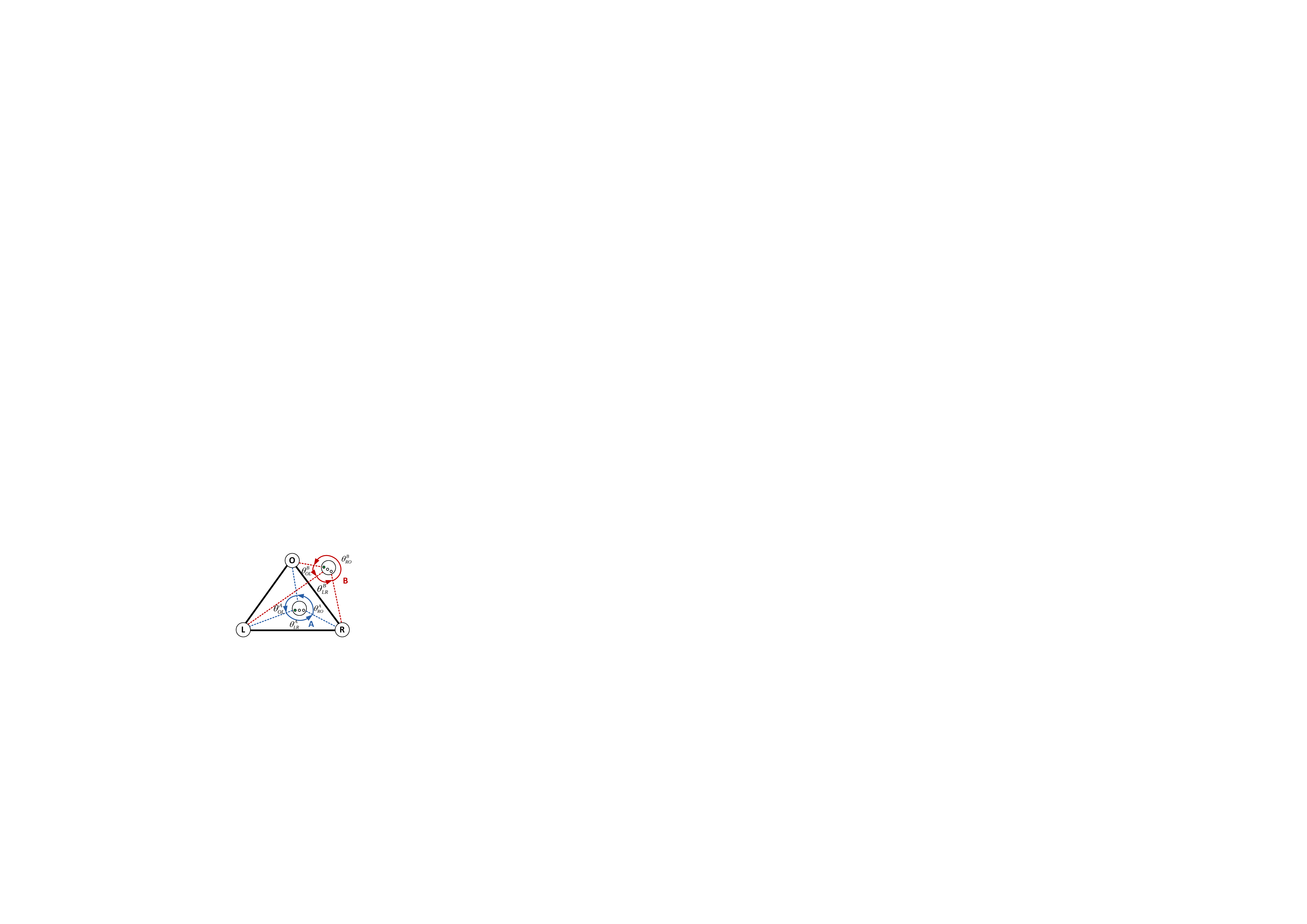}}
  %
  \subfloat[][Triangle Expansion Controller]{
  \label{fig:vectorField}
  \includegraphics[width=.6\linewidth]{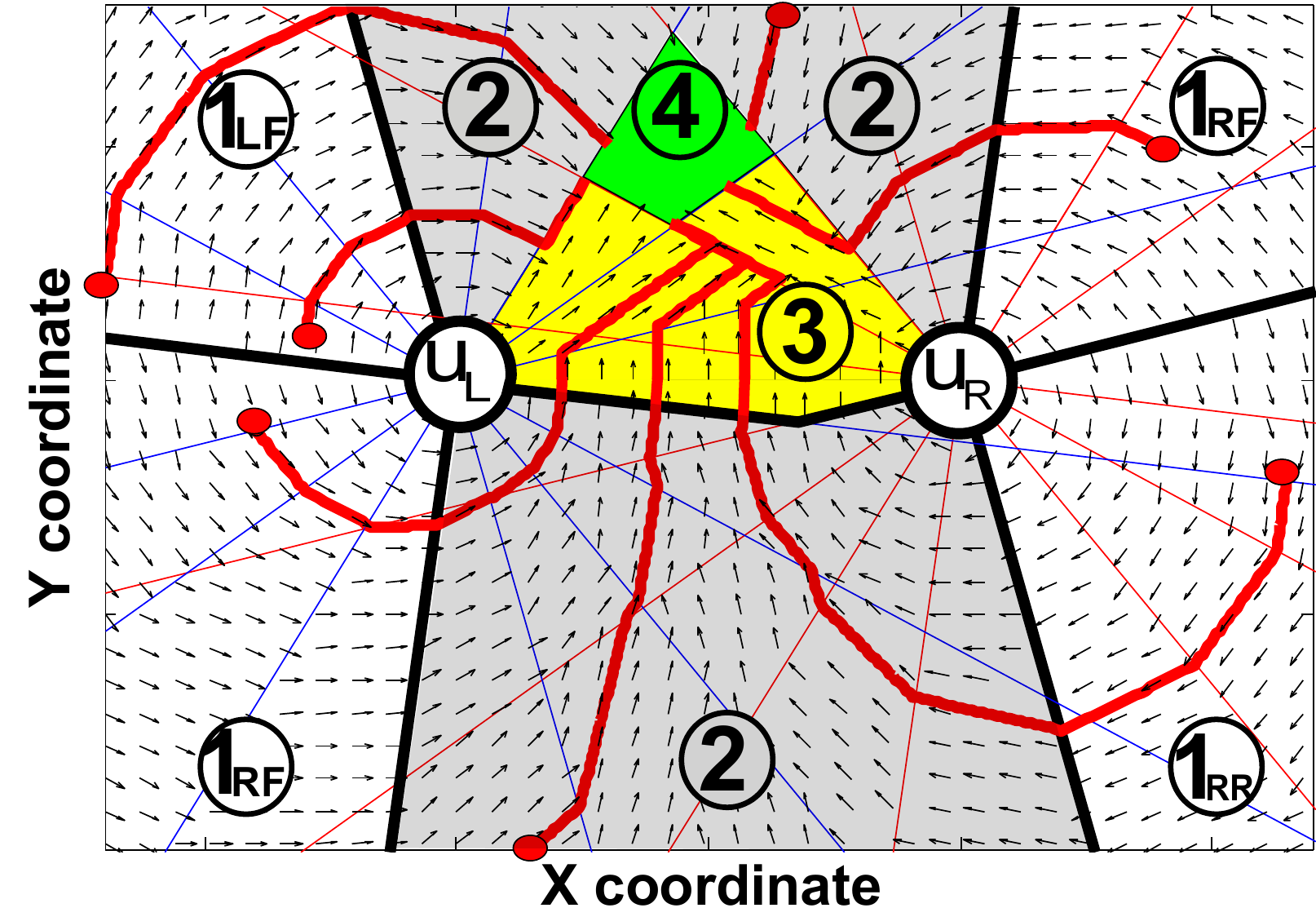}}
  \caption{
  \textbf{\protect\subref{fig:occupancyTest}} The occupancy test algorithm determines if a robot is inside a given triangle.  If any angle between neighbors of $u$ is greater than $\pi$, then $u$ is outside of the triangle.
  \textbf{\protect\subref{fig:vectorField}}
  Diagram of triangle expansion controller regions between robots $u_L$ and $u_R$, each with $\frac{\pi}{8}$ bearing resolution.  A robot 1 in region 1 rotates around $u_L$ or $u_R$, so that it always converges in region 2. A robot in region 2 always moving toward the direction where its two inner angles are getting smaller. By doing so, The controller always guides a robot from lower number region to higher number adjacent region by only considering $\theta_L$ and $\theta_R$.
All sample trajectories in red lines converge to the goal region.
}
\end{figure}

\subsection{Frontier and Frontier-Wall State}
When robot $u$ enters the frontier or frontier-wall state, it becomes stationary and runs algorithm~\ref{alg:FrontierState}.
%
In lines 3-7, $u$ labels all of its triangles which include a frontier edge to frontier triangles.  These triangles become sources for the frontier message that guides navigating robots to them. (line 8-9). 
%
The frontier robots compute the \emph{Frontier angle}, $\theta_F$, between adjacent frontier neighbors, in the direction of the frontier normal, shown in Figs.~\ref{fig:Discovery} and~\ref{fig:bfs}.
This is done in line 11-12.
%
%
%
%
To maintain the simply-connected frontier subnetwork, robot $u$ will need to update its frontier edges when new triangles are added.  After a new navigation robot, $v$, expands and discovers new triangles, lines 14-16 ensure the frontier edges adjacent to robot $u$ are updated when messages from $v$ are received.  If robot $u$ receives a disconnect message from $v$, it has no more incident frontier edges, and transitions to the Internal state in lines 17-19.  This is illustrated in Fig.~\ref{fig:Discovery}) by robot $l_0$.
If $u$ and $v$ are both frontier-wall robots, $v$'s disconnect message to $u$ will cause it to transition to internal state and create a wall edge (line 20-22). 

\begin{algorithm}
\footnotesize
\caption{\proc{Frontier$/$Frontier-Wall}()
\label{alg:FrontierState}}
\begin{algorithmic}[1]
\While{$u.state $=\emph{Frontier} OR $u.state $=\emph{Frontier-Wall}  }
\ForAll {$\mathcal T_i \in TriangleList$}
	\If {$\proc{IsContainFrontierEdge}$($T_i$)}
		\State $\proc{SetFrontierTriangle}$($T_i$)
	\Else
		\State $\proc{ClearFrontierTriangle}$($T_i$)
	\EndIf
	\State $\proc{UpdateTriangleHop}(T_i, N(o))$
	\State $\proc{BCastTriangleHop}(N(o))$
\EndFor
\State \proc{Compute/RotateToNormalVec}($B(u.L), B(u.R)$)
\State $\theta_F \gets$ \proc{ComputeFrontierAngle}($B(u.L), B(u.R)$)
\State \proc{BroadcastFrontierAngle}($\theta_F$)
\If{\proc{RecvFMsg}($u^{old}$, $u^{new}$)}
	\State UpdateFNbr($u^{old}$, $u^{new}$)
\EndIf
\State $v  \gets$ \proc{ RecvDisconnectMsg}()
\If {$v.state$ = \emph{Frontier}}
	\State $u.state$ $\gets$ \emph{Internal}
\ElsIf {$v.state$ = \emph{Frontier-Wall} $\wedge u.state$= \emph{Frontier-Wall}}
	\State $u.state \gets$ \emph{Internal}
\EndIf
\EndWhile
\end{algorithmic}
\end{algorithm}

%

\subsection{Internal State}
Eventually, robot $u$ is likely to become an Internal robot.  It remains stationary, and relays broadcast messages.  Every robot processes broadcast messages by updating the hops of each triangle they own by considering the hops to adjacent triangles, finding the minimum, and adding one.  This procedure propagates the broadcast message, and the hops updated, and ensures that any new robot crossing the base edge will move to the frontier triangle that is nearest in the dual graph, providing a breadth-first construction.

\section{Experimental Results}
\label{sec:ExperimentalResults}


We have performed  several real world experiments, using the r-one
robots shown in~\cite{dars-rone}.  The capabilities of this platform  supports the assumptions in our problem statement.; each robot can measure the bearings to its nearby robots, despite of a limited resolution of only $\frac{\pi}{8}$, and exchange messages including those bearings and necessary information to run an implemented algorithm in Section~\ref{sec:Implementation} using inter-robot communication. Each robot also has 8 bump sensors that provides wall detection.
To evaluate a resulting triangulation quality or trace the trajectory of a navigation robot, we use the April-Tag system by APRIL group~\cite{april}. This measures the ground truth position, $P_u = \{ x_u,y_u, \theta_u \}$, of each robot $u$.
The $u$, however, cannot measure or use the ground-truth position while executing our algorithms. All robots only know the two-hop local network geometry shown in Fig.~\ref{fig:TriangleAngleDefinition}.


\subsection{Maximum Area Trangulation using MATalgorithm}
Fig.~\ref{fig:Screenshot} shows snapshots of triangulation. Over 8 trials using 9-16 robots, the average triangulated area is 1.5$\pm$0.29$m^2$.  It takes 7.8$\pm$2.1 robots to cover a unit area ($1$m$^2$).  The resulting triangulations are  $(\rho=3.6, \alpha=0.36 rad)${\em -fat}.
Fig.~\ref{fig:pieChart} shows that our triangulations cover about 91\% of the region behind the frontier edges.  The uncovered region is because the top-left and bottom-left corner in Fig.~\ref{fig:Screenshot} are wall edges (incident on two wall robots), and are not expanded by navigating robots.
Fig.~ \ref{fig:Histogram} shows the distribution of area covered by individual triangles.  The initial length of the base edge predicts the area of an ideal equilateral triangle should be $0.088m^2$, our triangles have a mean area of $0.13m^2$, with a std. dev. of $0.065m^2$. This discrepancy caused by the angle-based sensors; the robots cannot measure range, and therefore cannot control the area of the triangle they produce.  We show this by studying individual triangle quality.

\begin{figure*}[t]
  \renewcommand{\figheight}{1.6in}
  \centering
  \includegraphics[width=.95\linewidth]{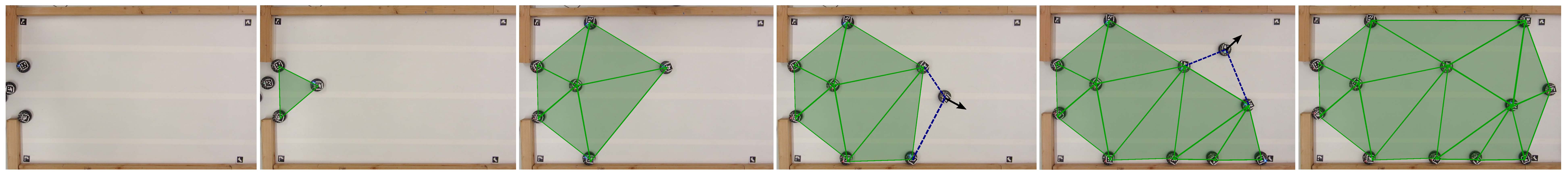}
  \caption{
   \label{fig:Screenshot}
  Screenshots of constructing triangulation with 12 robots.}
\end{figure*}

\begin{figure}[t]
\renewcommand{\figDim}{1.1}
\centering
  \subfloat[][]{
  \label{fig:pieChart}
  \includegraphics[scale=0.44]{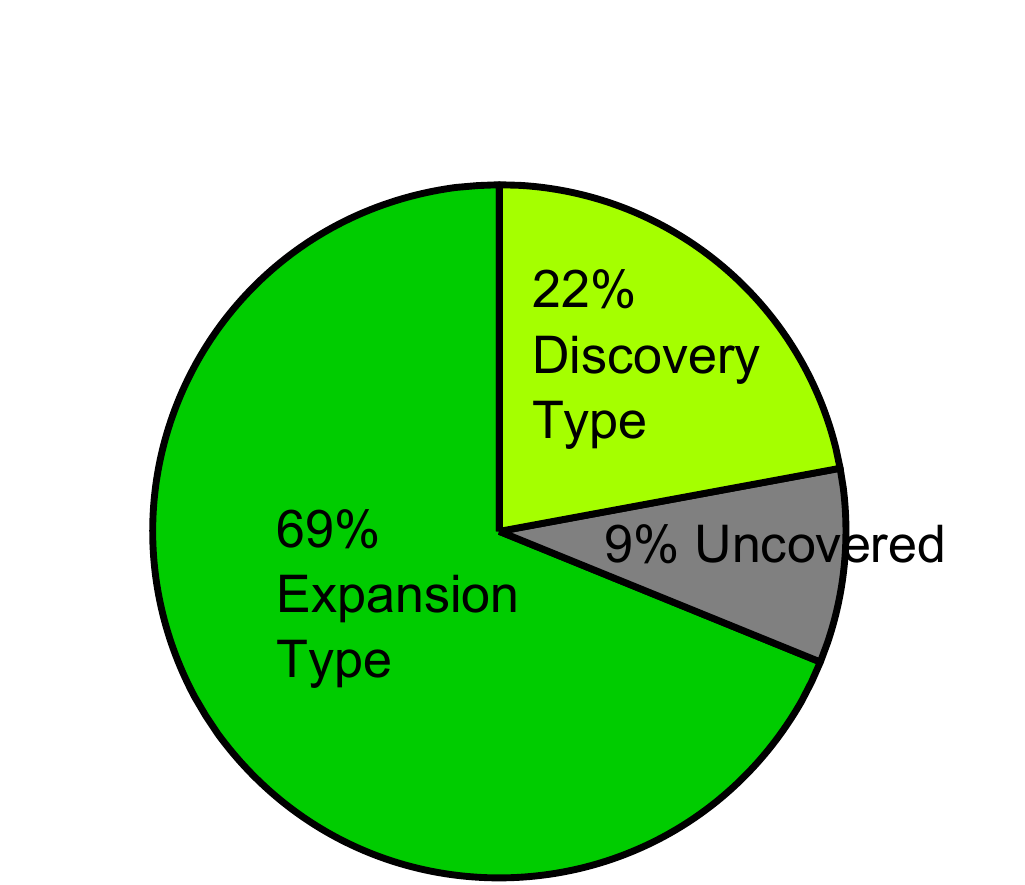}}
\ \ \
  \subfloat[][]{
  \label{fig:Histogram}
  \includegraphics[scale=0.36]{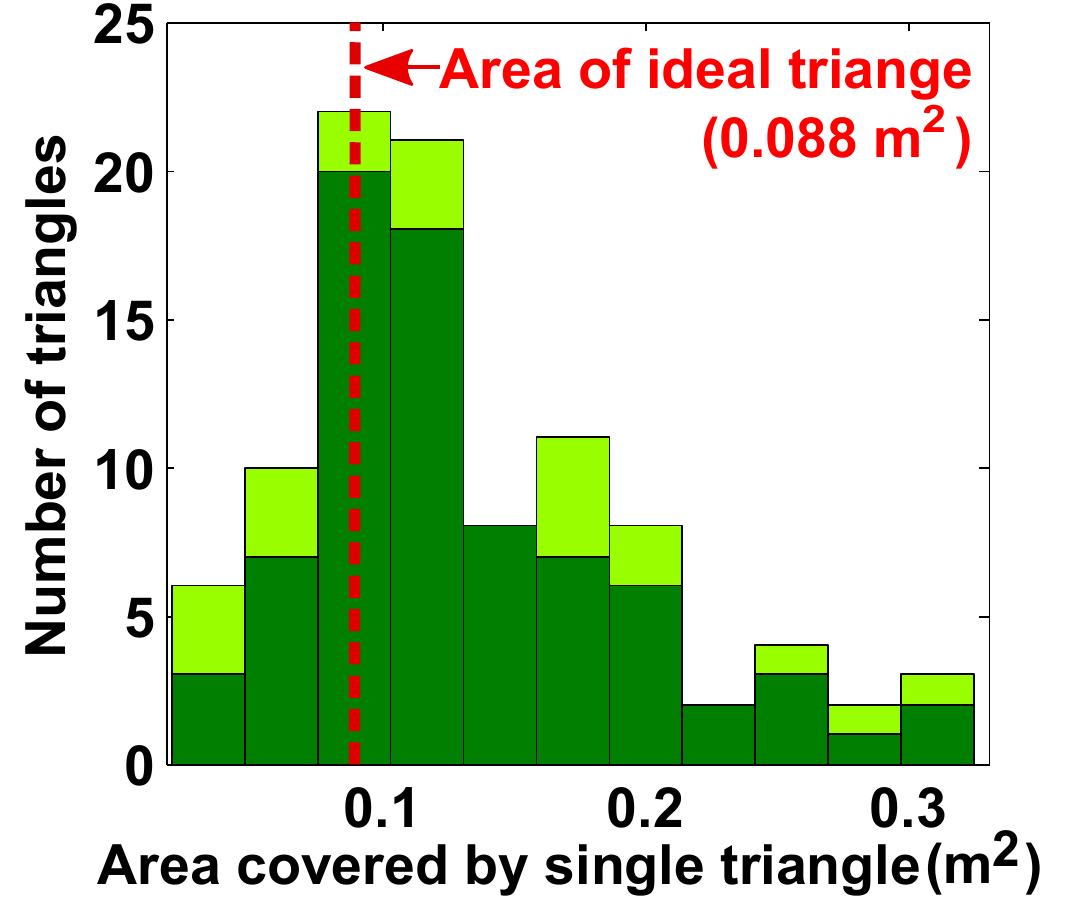}}
  %
\caption{\textbf{\protect\subref{fig:pieChart}} Pie chart for covered area by all triangles. \textbf{\protect\subref{fig:Histogram}} Histogram of covered area by each triangle.}
\end{figure}

Figs.~\ref{fig:minangle} and ~\ref{fig:RatioDelaunay} show our measurements of individual triangle quality; the distribution of minimum angle and maximum/minimum edge length ratio (MaxMin ratio) for each triangle.  The individual data shows triangle quality in a way that overall $\rho$ cannot.  An ideal equilateral triangle has a minimum angle of $\frac{\pi}{3}$ rad and MaxMin ratio of 1. Triangles satisfying the lower bound for minimum angle and the upper bound for MaxMin ratio are 95$\%$ and 96.7$\%$ of overall triangles, respectively.
We note that all triangles not constrained by a wall satisfy these bounds, meaning they are approximately the correct shape, but not always the correct size.  Knowing range would let us address this, but it is unclear how robots expanding the triangulation should choose between making a triangle of the correct shape, or the correct size.  We leave this for future work.

\setlength{\intextsep}{0pt}
\begin{figure}[t]
  \renewcommand{\figheight}{1.6in}
\centering
\subfloat[][Histogram for minimum angle of each triangle.]{
  \label{fig:minangle}
  \includegraphics[width=\linewidth]{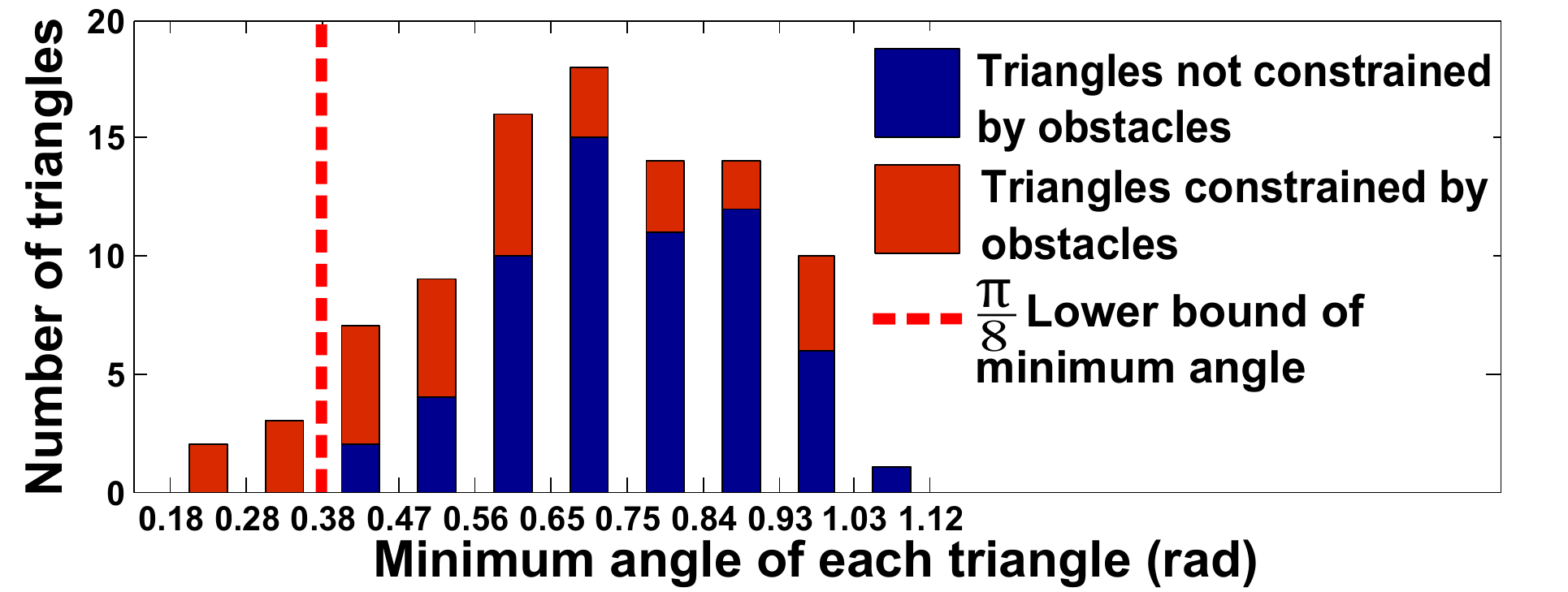}}
\ \
\subfloat[][Histogram for MaxMin ratio of each triangle.]{
  \label{fig:RatioDelaunay}
  \includegraphics[width=\linewidth]{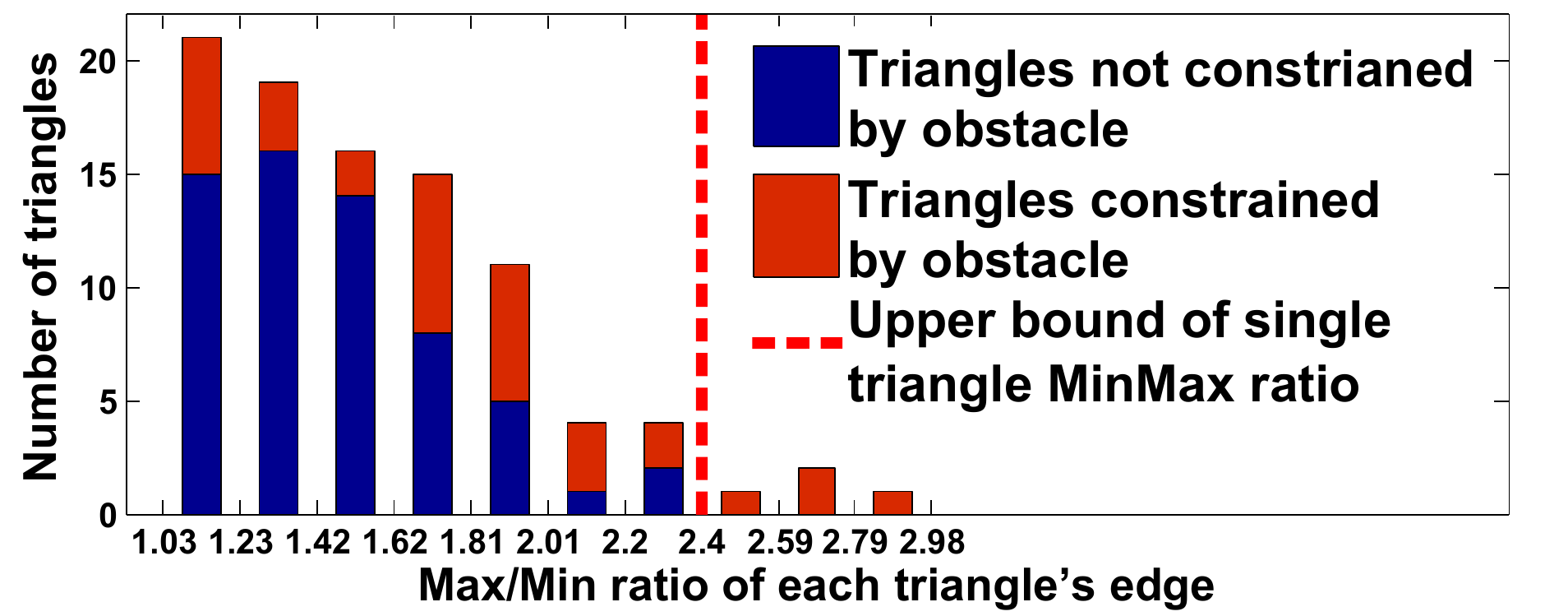}}
\caption{ \textbf{\protect\subref{fig:minangle}} Distribution of minimum angle of each triangle, not for a global $\rho$. All triangles not constrained by a wall satisfy the lower bound, $\frac{\pi}{8}$. 
\textbf{\protect\subref{fig:RatioDelaunay}} Distribution of MaxMin ratio of each triangle. 
All triangles whose minimum angle is larger than the $\frac{\pi}{8}$ (blue colored) also satisfy corresponding upperbound MaxMin ratio. }
\end{figure}


\subsection{Dual Graph Navigation}
%

We start each navigation experiment with a constructed triangulation. The triangulation is $(\rho=1.36, \alpha=0.88rad)${\em -fat}.  For each trial, we randomly select one triangle as a goal, and the robots build a tree on the dual graph. Fig.~\ref{fig:NavPaths}, shows five trials of the 34 we conducted. The numbers inside the triangles indicate the hops in the dual graph from the goal triangle. (The trial from the 8-hop triangle is also shown in Fig.~\ref{fig:NavTestResult})
The thick blue lines show connectivity between owners of adjacent triangles.  Note that this graph is not complete, but it is a spanning graph of all triangle owners in $G$, which is implied by Theorem.~\ref{ConnectedOwners}.

\begin{figure}[t]
\centering
\subfloat[][]{
  \label{fig:NavPaths}
  \includegraphics[width=0.6\linewidth]{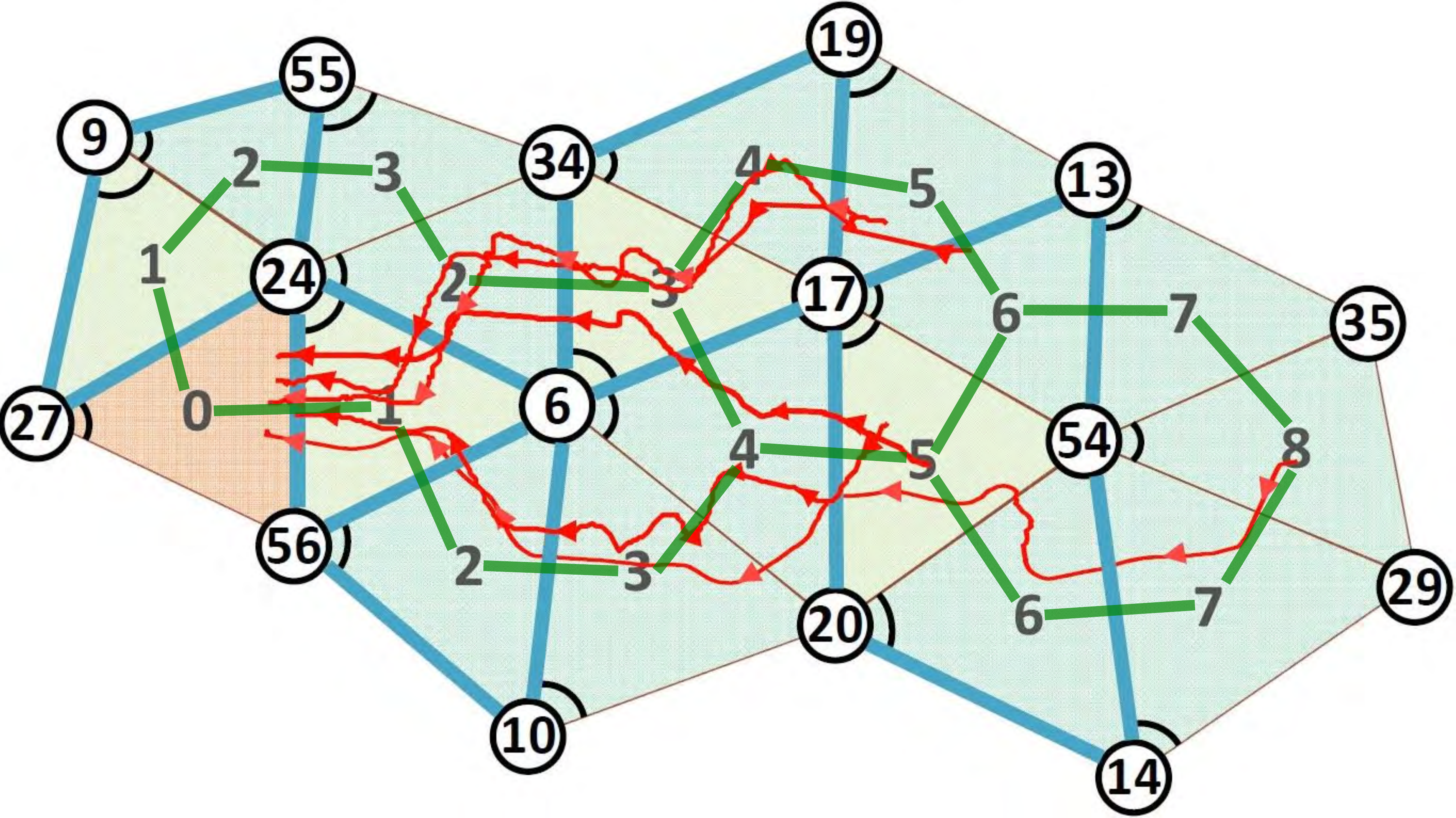}}
\ \
\subfloat[][]{
  \label{fig:SF}
  \includegraphics[width=0.33\linewidth]{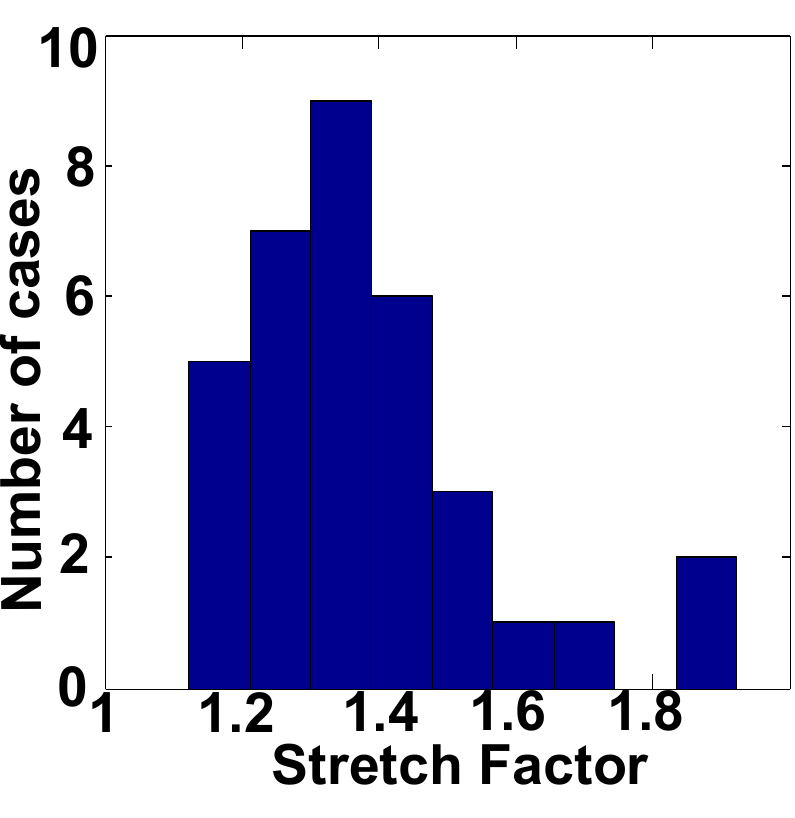}}
 \caption{ \textbf{\protect\subref{fig:NavPaths}} Sample navigation results with r-one robots. \textbf{\protect\subref{fig:SF}} Stretch factor histogram with 34 trials of navigation experiment.}
\end{figure}

%
%

Fig.~\ref{fig:SF} shows the distribution of the stretch factor over all trials of navigation tests with various start-goal pairs. The mean stretch factor is 1.38 $\pm$ 0.19. This is much less than the theoretical bound implied by Theorem~\ref{thm:stretch}, which is based on worst-case assumptions.
Our occupancy algorithm produces 91$\%$ correctness in returning the triangle that actually includes $u$. We define navigation correctness as the ratio of times $u$ moves to the correct adjacent triangle. This result is 99$\%$, with incorrect navigation caused by occupancy errors.

\section{Conclusion}
\label{sec:Conclusion}
%

We have presented a distributed algorithm to triangulate a workspace, produce a physical data structure, and use this structure for communications and robot navigation.  There are many exciting new challenges that lie ahead. The next step is to extend this approach with a self-stabilizing algorithm that can construct and repair the triangulation and dual graph dynamically, starting from an arbitrary distribution of robots. While existing controllers can already form triangulated graphs~\cite{w._m._spears_distributed_2004}, what is needed is construction and maintenance of the physical data structures, i.e., the primal and dual graphs.  Future work can extend these ideas to very large
populations and dynamic environments. Another objectives will be to improve the routing algorithm to replace the simple dual graph paths by more sophisticated geodesic trajectories.  We are currently working on multi-robot patrolling using the physical data structure to store visitation frequencies and implement a geodesic Lloyds controller
to provide periodic coverage of the triangulation with multi-patrolling robots.

%
%


\bibliographystyle{ieeetran}
\bibliography{lit,bibliography,mclurkin-bibliography}

\begin{thebibliography}{10}
\providecommand{\url}[1]{#1}
\csname url@rmstyle\endcsname
\providecommand{\newblock}{\relax}
\providecommand{\bibinfo}[2]{#2}
\providecommand\BIBentrySTDinterwordspacing{\spaceskip=0pt\relax}
\providecommand\BIBentryALTinterwordstretchfactor{4}
\providecommand\BIBentryALTinterwordspacing{\spaceskip=\fontdimen2\font plus
\BIBentryALTinterwordstretchfactor\fontdimen3\font minus
  \fontdimen4\font\relax}
\providecommand\BIBforeignlanguage[2]{{%
\expandafter\ifx\csname l@#1\endcsname\relax
\typeout{** WARNING: IEEEtran.bst: No hyphenation pattern has been}%
\typeout{** loaded for the language `#1'. Using the pattern for}%
\typeout{** the default language instead.}%
\else
\language=\csname l@#1\endcsname
\fi
#2}}

\bibitem{fkk+-etrsr-11}
S.~P. Fekete, T.~Kamphans, A.~Kr{\"o}ller, J.~Mitchell, and C.~Schmidt,
  ``Exploring and triangulating a region by a swarm of robots,'' in \emph{Proc.
  APPROX 2011}, ser. LNCS, vol. 6845.\hskip 1em plus 0.5em minus 0.4em\relax
  Springer, 2011, pp. 206--217.

\bibitem{bfk+-tueur-13}
A.~Becker, S.~P. Fekete, A.~Kr{\"o}ller, S.~K. Lee, J.~McLurkin, and
  C.~Schmidt, ``Triangulating unknown environments using robot swarms,'' in
  \emph{Proc. 29th Annu. ACM Sympos. Comput. Geom.}, 2013, pp. 345--346, video
  available at
  http://imaginary.org/film/triangulating-unknown-environments-using-robot-swarms.

\bibitem{fk-gbrls-06}
S.~P. Fekete and A.~Kr{\"o}ller, ``Geometry-based reasoning for a large sensor
  network,'' in \emph{Proc. 22nd Annu. ACM Sympos. Comput. Geom.}, 2006, pp.
  475--476.

\bibitem{fk-trsn-07}
------, ``Topology and routing in sensor networks,'' in \emph{Proc. 3rd Intern.
  Workshop Algorithmic Aspects Wireless Sensor Networks (ALGOSENSORS)}, 2007,
  pp. 6--15.

\bibitem{be-mgot-92i}
M.~Bern and D.~Eppstein, ``Mesh generation and optimal triangulation,'' in
  \emph{Computing in Euclidean Geometry}, ser. Lecture Notes Series on
  Computing, D.-Z. Du and F.~K. Hwang, Eds.\hskip 1em plus 0.5em minus
  0.4em\relax Singapore: World Scientific, 1992, vol.~1, pp. 23--90.

\bibitem{efgmps-iaarp-08}
A.~Efrat, S.~P. Fekete, P.~R. Gaddehosur, J.~S. Mitchell, V.~Polishchuk, and
  J.~Suomela, ``Improved approximation algorithms for relay placement,'' in
  \emph{Proc. 16th Annu. Europ. Sympos. Algor.}\hskip 1em plus 0.5em minus
  0.4em\relax Springer, 2008, pp. 356--367.

\bibitem{dfk-srpra-11}
B.~Degener, S.~P. Fekete, B.~Kempkes, and F.~M. auf~der Heide, ``A survey on
  relay placement with runtime and approximation guarantees,'' \emph{Computer
  Science Review}, vol.~5, no.~1, pp. 57--68, 2011.

\bibitem{bdhr-dsngf-10}
J.~Bredin, E.~Demaine, M.~Hajiaghayi, and D.~Rus, ``Deploying sensor networks
  with guaranteed fault tolerance,'' \emph{Networking, IEEE/ACM Transactions
  on}, vol.~18, no.~1, pp. 216 --228, feb. 2010.

\bibitem{habfm-ardrs-04}
T.-R. Hsiang, E.~M. Arkin, M.~A. Bender, S.~P. Fekete, and J.~S.~B. Mitchell,
  ``Algorithms for rapidly dispersing robot swarms in unknown environments,''
  in \emph{Algorithmic Foundations of Robotics V}, ser. STAR, J.~B. et~al.,
  Ed., vol.~7.\hskip 1em plus 0.5em minus 0.4em\relax Springer, 2004, pp.
  77--93.

\bibitem{ms-dadie-04}
J.~{McLurkin} and J.~Smith, ``Distributed algorithms for dispersion in indoor
  environments using a swarm of autonomous mobile robots,'' in \emph{Proc. 7th
  Internat. Sympos. Distr. Auton. Robot. Syst.}, 2004.

\bibitem{durhamBullo}
J.~W. Durham, A.~Franchi, and F.~Bullo, ``Distributed pursuit-evasion with
  limited-visibility sensors via frontier-based exploration,'' in \emph{{IEEE}
  International Conference on Robotics and Automation}, 2010, pp. 3562--3568.

\bibitem{spears_overview_2005}
W.~M. Spears, D.~F. Spears, R.~Heil, W.~Kerr, and S.~Hettiarachchi, ``An
  overview of physicomimetics,'' \emph{Lecture Notes in Computer
  {Science-State} of the Art Series}, vol. 3342, 2005.

\bibitem{mclurkin_analysis_2008}
J.~{McLurkin}, ``Analysis and implementation of distributed algorithms for
  {Multi-Robot} systems,'' {Ph.D.} thesis, Massachusetts Institute of
  Technology, 2008.

\bibitem{li_navigation_2005}
Q.~Li and D.~Rus, ``Navigation protocols in sensor networks,'' \emph{{ACM}
  Trans. Sen. Netw.}, vol.~1, no.~1, pp. 3--35, 2005.

\bibitem{dars-rone}
J.~McLurkin, A.~J. Lynch, S.~Rixner, T.~W. Barr, A.~Chou, K.~Foster, and
  S.~Bilstein, ``A low-cost multi-robot system for research, teaching, and
  outreach,'' in \emph{Distributed Autonomous Robotic Systems}, vol.~83.\hskip
  1em plus 0.5em minus 0.4em\relax Springer Tracts in Advanced Robotics, 2013,
  pp. 597--609.

\bibitem{april}
E.~Olson, ``Apriltag: A robust and flexible visual fiducial system,'' in
  \emph{{IEEE} International Conference on Robotics and Automation}, 2011, pp.
  3400--3407.

\bibitem{w._m._spears_distributed_2004}
W.~M. Spears, D.~F. Spears, J.~C. Hamann, and R.~Heil, ``Distributed,
  {Physics-Based} control of swarms of vehicles,'' \emph{Autonomous Robots},
  vol.~17, no.~2, pp. 137--162, 2004.

\end{thebibliography}

\end{document}